\documentclass{article}

\PassOptionsToPackage{numbers, compress}{natbib}

\usepackage[preprint]{neurips_2026}

\usepackage[utf8]{inputenc} %
\usepackage[T1]{fontenc}    %
\usepackage{hyperref}       %
\usepackage{url}            %
\usepackage{booktabs}       %
\usepackage{amsfonts}       %
\usepackage{nicefrac}       %
\usepackage{microtype}      %
\usepackage[dvipsnames]{xcolor}         %

\usepackage{caption}
\usepackage{multirow}
\usepackage{wrapfig}
\usepackage{listings}
\usepackage{enumitem}
\usepackage{wrapfig}

\usepackage{amsmath}
\usepackage{amssymb}
\usepackage{mathtools}
\usepackage{amsthm}
\usepackage{cleveref}

\usepackage{algorithm}
\usepackage{algorithmic}

\usepackage{amsthm,amsmath,amsfonts,bm}
\usepackage{mathtools}

\newcommand{\norm}[1]{\left\|#1\right\|}
\newcommand{\inp}[2]{\langle #1, #2\rangle}

\def\Appref#1{Appendix~\ref{#1}}

\def\eqref#1{equation~\ref{#1}}

\def\1{\bm{1}}

\DeclareMathAlphabet{\mathsfit}{\encodingdefault}{\sfdefault}{m}{sl}
\SetMathAlphabet{\mathsfit}{bold}{\encodingdefault}{\sfdefault}{bx}{n}

\def\gD{{\mathcal{D}}}

\def\gN{{\mathcal{N}}}
\def\gO{{\mathcal{O}}}

\newcommand{\E}{\mathbb{E}}

\newcommand{\R}{\mathbb{R}}

\DeclareMathOperator{\Tr}{Tr}

\def\ddefloop#1{\ifx\ddefloop#1\else\ddef{#1}\expandafter\ddefloop\fi}
\def\ddef#1{\expandafter\def\csname 
bb#1\endcsname{\ensuremath{\mathbb{#1}}}}
\ddefloop ABCDEFGHIJKLMNOPQRSTUVWXYZ\ddefloop
\def\ddefloop#1{\ifx\ddefloop#1\else\ddef{#1}\expandafter\ddefloop\fi}
\def\ddef#1{\expandafter\def\csname 
b#1\endcsname{\ensuremath{\mathbf{#1}}}}
\ddefloop ABCDEFGHIJKLMNOPQRSTUVWXYZ\ddefloop
\def\ddef#1{\expandafter\def\csname 
c#1\endcsname{\ensuremath{\mathcal{#1}}}}
\ddefloop ABCDEFGHIJKLMNOPQRSTUVWXYZ\ddefloop
\def\ddef#1{\expandafter\def\csname 
h#1\endcsname{\ensuremath{\widehat{#1}}}}
\ddefloop ABCDEFGHIJKLMNOPQRSTUVWXYZ\ddefloop
\def\ddef#1{\expandafter\def\csname 
hc#1\endcsname{\ensuremath{\widehat{\mathcal{#1}}}}}
\ddefloop ABCDEFGHIJKLMNOPQRSTUVWXYZ\ddefloop
\def\ddef#1{\expandafter\def\csname 
t#1\endcsname{\ensuremath{\widetilde{#1}}}}
\ddefloop ABCDEFGHIJKLMNOPQRSTUVWXYZ\ddefloop
\def\ddef#1{\expandafter\def\csname 
tc#1\endcsname{\ensuremath{\widetilde{\mathcal{#1}}}}}
\ddefloop ABCDEFGHIJKLMNOPQRSTUVWXYZ\ddefloop

\usepackage{nicefrac}

\newsavebox\CBox

\newcommand{\diag}{\textrm{diag}}

\usepackage{booktabs}
\usepackage{subcaption}
\DeclareSymbolFont{bbold}{U}{bbold}{m}{n}
\DeclareSymbolFontAlphabet{\mathbbold}{bbold}

\newtheorem{theorem}{Theorem}
\newtheorem{lemma}[theorem]{Lemma}

\newtheorem{definition}{Definition}

\usepackage{thm-restate} %

\usepackage{soul}

\newcommand{\best}[1]{\textbf{#1}}     %
\newcommand{\secbest}[1]{\underline{#1}}  %

\newcommand{\ALGCOMMENT}[1]{\quad {\color{blue}\# #1}}
\newcommand{\ALGCOMMENTLINE}[1]{\STATE {\color{blue}\# #1}}

\title{Ori: Drift-Aware Streaming $k$-PCA for Very Low-Rank LLM Training}

\author{%
\normalfont
\begin{tabular}{@{}c@{}}
\textbf{%
Xinyu Luo\textsuperscript{1}\thanks{Equal contribution.}
\hspace{0.4em}
Jincheng Zhou\textsuperscript{1}\footnotemark[1]
\hspace{0.4em}
Mengbo Wang\textsuperscript{1,2}
\hspace{0.4em}
Zilin Shen\textsuperscript{1}
\hspace{0.4em}
Nadia Atallah Lanman\textsuperscript{2,3}
}
\\
\textbf{%
Petros Drineas\textsuperscript{1}
\hspace{0.4em}
Bruno Ribeiro\textsuperscript{1}
\hspace{0.4em}
Brian Bullins\textsuperscript{1}
}
\\[0.4em]
\texttt{\{luo466, zhou791, wang4887, shen624, natallah,}
\\
\texttt{pdrineas, ribeiro, bbullins\}@purdue.edu}
\end{tabular}%
}

\begin{document}

\maketitle
\begingroup
\renewcommand{\thefootnote}{\arabic{footnote}}

\footnotetext[1]{%
Department of Computer Science, Purdue University.}

\footnotetext[2]{%
Purdue Institute for Cancer Research, Purdue University.}

\footnotetext[3]{%
Department of Comparative Pathobiology, Purdue University.}

\endgroup

\setcounter{footnote}{3}

\begin{abstract}
A prominent family of low-rank LLM optimizers, pioneered by GaLore, performs gradient-subspace projection: at each step, they project the gradient into a low-dimensional subspace and map the update back to the full model. 
Most existing work either recomputes the gradient subspace at certain intervals, or performs post-hoc adaptation to catch up to the moving subspace.
In this work, we explicitly identify in practice the slow, stable manner in which LLM gradient subspaces drift across time.
To this end, we propose Ori,
a novel low-rank optimizer for training LLMs that captures the shifting principal gradient directions over time to achieve state-of-the-art accuracy under stringent low-rank constraints.
Ori employs a streaming $k$-PCA algorithm inspired by Oja++, whose low per-step cost enables high-frequency, on-the-fly subspace tracking.
In practice, Ori reduces the performance gap of the state-of-the-art low-rank optimizers to full-rank optimizers by up to 69.3\% on a 1B LLaMA pretraining task, and achieves the best performance on 10 out of 16 task scenarios in Roberta-Base fine-tuning. 
Together, these contributions allow Ori to further push the performance and efficiency frontiers of gradient-projecting optimizers, narrowing the gap to full-rank training for LLMs.
Code can be found at \url{https://anonymous.4open.science/r/Ori-Submission-2132}.
\end{abstract}

\section{Introduction}

Training large language models (LLMs) under strict memory budgets has motivated a variety of low-rank optimization methods that restrict either the weights or the gradients to a low-dimensional subspace~\citep{hu2022lora,zhao2024galore, mo2025parameter}. This broad field is primarily divided into two families. {\em Weight-subspace methods}, such as LoRA, adapt the model weights directly~\citep{hu2022lora, lialin2023relora}. More recently, {\em gradient-subspace methods}, pioneered by GaLore~\citep{zhao2024galore}, project the gradients into a low-rank subspace before the optimizer step. The latter approach and its subsequent works have proven particularly effective, demonstrating superior memory savings and performance by focusing on the dynamics of the optimizer itself~\citep{zhao2024galore, zhang2024q, lialin2023relora, han2024sltrain, dettmers2023qlora, robert2024ldadam}.

Yet most gradient-subspace methods in this promising family face a core technical challenge: that the principal gradient directions, instead of being static, evolve as the model weights are updated throughout training.
Consequently, the success of any such method hinges on how effectively it adapts to this changing subspace. 
GaLore tackles this by periodically recomputing the entire subspace from scratch via an expensive SVD~\citep{zhao2024galore}, a process that acts as a ``cold restart'' and discards prior subspace information. More recent methods, such as Online Subspace Descent (OSD)~\citep{liang2024memory}, move towards more continuous adaptation by using a secondary, non-convex optimization loop to ``catch up'' to the moving subspace.

\begin{wrapfigure}{r}{0.48\textwidth}
    \centering
    \includegraphics[width=0.40\textwidth]{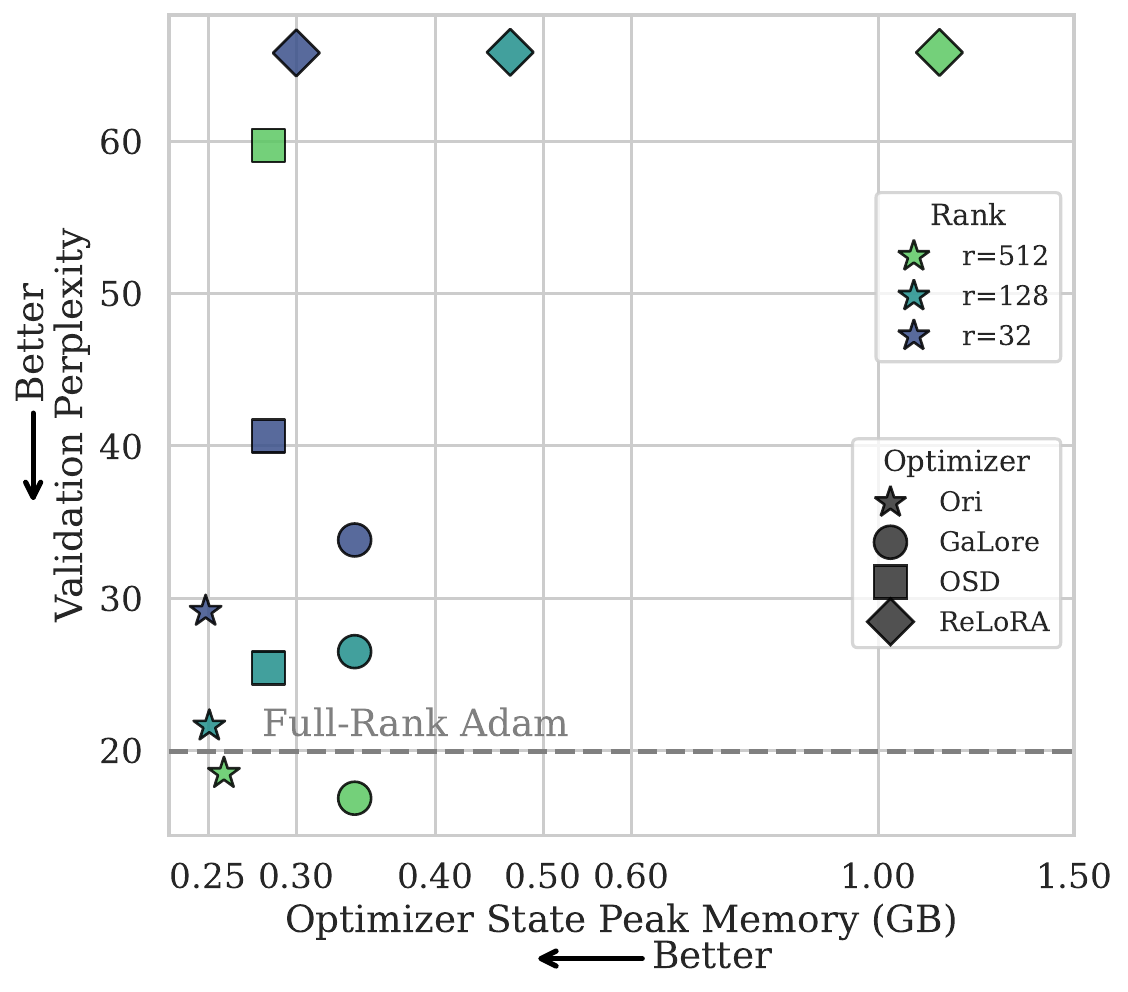}
    \caption{
        Trade-off between evaluation perplexity (Y-axis, lower is better) and peak optimizer state memory (X-axis, lower is better) for various low-rank optimizers in different ranks on the 1B LLaMA pre-training task. Bottom-left corner is the ideal region. 
        \textbf{Ori ($\star$) forms a better Pareto frontier, achieving a superior performance-to-memory trade-off compared to state-of-the-art baselines like GaLore ($\circ$) and OSD ($\square$).}
    }
    \label{fig:pretrain-frontier}
    \vspace{-25pt} 
\end{wrapfigure}

However, existing adaptation strategies may overlook a key empirical observation that we make explicit in this work: the gradient subspace does not jump erratically but, rather, drifts smoothly after warm-up. 
In other words, the projector at step $t$ is predictably close to that at $t-\Delta t$ for small $\Delta t$
(e.g., 20).
This slow, structured drift suggests a different perspective. Rather than recomputing the subspace (as in GaLore~\citep{zhao2024galore}) or optimizing it with non-convex gradient descent (as in OSD~\citep{liang2024memory}), we can track it with a streaming update that makes small, frequent rotations aligned with the most recent gradients, while 
also
preserving and transferring optimizer statistics across these rotations.
In light of these observations, we arrive at a different design problem:
\begin{center}
\textit{How do we continuously maintain a memory-light, compute-efficient, and stable projector that tracks \\ a drifting gradient subspace?}
\end{center}

We thus introduce \textbf{Ori}, a streaming $k$-PCA-based \textbf{O}ja-style updated low-\textbf{r}ank \textbf{i}terative optimizer,  
designed to exploit the slow-drift regime of LLM gradient subspaces. Rather than treating each subspace estimation as an independent, ``cold-start'' problem---as is the case with periodic SVD recomputation in GaLore---Ori treats subspace estimation as a continuous tracking problem.
The core innovation of Ori lies in its drift-aware bootstrapping mechanism: we carry over (transfer) the subspace basis from previous steps as the initialization state for the current step. Because of the slow-drift phenomenon, the prior basis provides a suitable starting point that requires only a few lightweight, streaming $k$-PCA-inspired updates to realign with the instantaneous gradient geometry. 

In practice, Ori instantiates this via the Oja++ algorithm~\citep{oja1982simplified,allen2017first}, which enables high-frequency basis rotations at a per-step cost of only $\mathcal{O}(dk^2)$. This approach allows for a warm-start transfer of Adam-style first and second moments between successive bases, ensuring stable optimization dynamics. Beyond computational efficiency, we show that this formulation allows Ori to achieve provable sublinear dynamic regret bounds under drift, providing new theoretical tools and guarantees for tracking fidelity that static recomputation methods lack.

Empirically, Ori narrowly tracks the evolving gradient subspace and converts this tracking fidelity into end-to-end wins where it matters most: very low ranks. Under tight rank budgets, each dimension in the projector is precious, and stale misalignment is especially costly. 
Empirical results show that Ori reduces the performance gap of state-of-the-art low-rank baselines to the full-rank Adam optimizer by \textbf{up to 69.3\%} on the 1B LLaMA pre-training task, where the optimizer state rank is 128 in contrast to the model embedding dimension of 2048. And on Roberta-Base fine-tuning tasks, where the ranks are even lower (8 or 4), Ori achieves superior performance on \textbf{10 out of 16} scenarios.

Moreover, Ori achieved this superior performance with one of the lowest computation budgets. 
See Figure~\ref{fig:pretrain-frontier} for a performance–memory overview on LLaMA-1B pre-training, where Ori consistently sits on the bottom-left frontier.
Through ablation, we observe that (i) increasing the projector-update period degrades performance—quantifying the cost of staleness, and (ii) injecting a small carry-over (via a drift coefficient $\rho\approx0.99$) improves accuracy—confirming the value of continuity and statistic transfer. 
Across pre-training and fine-tuning settings, Ori consistently improves the performance–memory trade-off over LoRA, GaLore, and OSD in the low-rank regime, thereby reducing the gap to full-rank Adam. 
\textbf{Our primary contributions in this work are as follows:}

\begin{itemize}[nosep,leftmargin=*]
    \item We identify and quantify slow gradient subspace drift during LLM training and argue that exploiting its continuity -- not merely reacting to it -- matters for low-rank optimization.
    \item We propose Ori, a drift-aware low-rank optimizer that equips gap-free streaming $k$-PCA with moment transfer to achieve continuous, high-frequency projector updates at negligible cost.
    \item We provide theoretical support via new dynamic-regret bounds under drift, yielding sublinear regret in the noiseless setting with mild dependence on the consecutive drift parameter.
    \item We present empirical results showing Ori’s superior performance under stringent rank constraints, with ablations isolating the roles of update frequency and drift-aware continuity.
\end{itemize}

\section{Background and Related Work}
We refer the reader to \Appref{app:more-related-work} for a more detailed discussion of related work.

\vspace{-5pt}
\paragraph{The Landscape: Low-Rank Optimization.}
Low-rank optimization in LLMs can be broadly divided into two categories. \textbf{Weight-subspace methods} adapt the model weights by constraining them to a low-rank manifold, a paradigm popularized by LoRA and its many variants~\citep{hu2022lora, lialin2023relora, zhang2023adalora, mo2025parameter, han2024sltrain}. In contrast, Ori belongs to the second category, \textbf{gradient-subspace methods}, which operate on a low-rank projection of the gradients. This line of work was pioneered by GaLore~\citep{zhao2024galore}, which uses periodic Singular Value Decomposition (SVD) to identify and project gradients onto a low-rank basis. Many subsequent works have built upon this foundation by incorporating diverse strategies such as quantization~\citep{zhang2024q}, structured sparsity~\citep{muhamed2024grass}, random projections~\citep{he2024subspace, wang2024flora}, and importance sampling~\citep{zhang2025i3s, liao2024galore}. However, a common limitation underlines these approaches: by relying on static or infrequently updated projectors, they are inherently reactive to the evolving gradient landscape. This can lead to a ``staleness penalty,'' where gradients are projected onto an outdated basis, losing critical information for optimization. Ori is designed to directly counteract this penalty by replacing infrequent recomputation with a continuous, online tracking mechanism.

\vspace{-5pt}
\paragraph{The Mechanism: Online Subspace Tracking.}
The challenge of staleness in subspace projection has motivated online methods that update projectors frequently without relying on full SVD. Recent work explores {\bf auxiliary optimization techniques} for dynamic subspace learning~\citep{liang2024memory, nguyen2024lean, robert2024ldadam, rajabi2025subtrack++}. \citet{nguyen2024lean} propose Subset-Norm and Subspace-Momentum, which decouples optimization by applying momentum in a low-dimensional subspace while performing SGD in the orthogonal complement. \citet{robert2024ldadam} use block power iteration on a weighted gradient-EMA combination to adapt optimizer states to evolving subspaces. \citet{rajabi2025subtrack++} track subspaces on the Grassmannian manifold via geodesic updates, leveraging past estimates and error feedback for accuracy.
~\citet{liang2024memory} propose a robust procedure using a secondary optimizer to perform gradient descent directly on the subspace manifold, initialized from the previous projector.

Ori adopts a fundamentally distinct strategy. Rather than relying on a secondary optimizer, it leverages the streaming $k$-PCA algorithm, \textbf{Oja++}~\citep{allen2017first}, to update the projector. This distinction carries significant implications. First, gradient descent on the subspace manifold is inherently non-convex and complex, but Oja-style multiplicative updates come with strong theoretical assurances. Specifically, Oja++ achieves a provably ``gap-free'' convergence rate, offering greater robustness in static settings compared to gradient-based methods. Second, as formalized in our theoretical contributions, our Oja-based update exhibits desirable properties even in non-stationary, drifting environments, yielding strong dynamic regret bounds. Finally, the Oja++ update is computationally cheaper than gradient descent on the manifold, enabling Ori to update the projector more frequently. This higher frequency allows for improved tracking of the evolving gradient landscape.

\section{Ori Optimizer}

In this section, we further discuss the technical foundation of Ori. First, we establish and quantify the problem of gradient subspace drift, highlighting the limitations of existing approaches. Next, we introduce the core motivation of bootstrapping subspace updates, which forms the conceptual basis of our method. We then detail the specific online algorithm, a modified Oja++, used to implement this solution. Finally, we present the complete Ori algorithm, synthesizing these components into a cohesive and efficient optimization procedure.

\subsection{The Phenomenon of Gradient Subspace Drift}

\paragraph{Formalizing and Quantifying Drift.}
Low-rank optimizers that rely on periodic updates, such as GaLore~\citep{zhao2024galore}, implicitly assume that the principal gradient subspace remains relatively stable between updates. In this work, we propose to explicitly characterize this evolution. 
To do so, we adopt a formal metric from prior work on the changing geometry of optimization trajectories: the \emph{canonical distance measure}~\citep{gur2018gradient}. 
At any training step $t$, the top-$k$ principal gradient directions $U^{(t)}_k$, which can be found via SVD of the gradient matrix $G_t = U^{(t)}\Sigma {V^{(t)}}^\top$, span a subspace represented by the orthogonal projector $P_t = U^{(t)}_k {U^{(t)}_k}^\top$.
Our notion of gradient subspace overlap, based on the canonical distance between the gradient subspace at step $t$ and a later step $t+\Delta t$, is then defined as 
\begin{align}  \label{eq:subspace-overlap}
    \text{overlap}\left(U^{(t)}_k, U^{(t+\Delta t)}_k\right) \equiv \frac{\text{Tr}(P_tP_{t+\Delta t})}{\sqrt{\text{Tr}(P_t)\text{Tr}(P_{t + \Delta t})}}
    = 1 - \frac{\norm{P_t - P_{t+\Delta t}}_F^2}{2k} 
    \propto \norm{P_t - P_{t +\Delta t}}_F^2.
\end{align}
This metric,\footnote{
\Cref{eq:subspace-overlap} is well-defined because the projector $P$ is idempotent ($P^2 = P$) and its trace equals its rank $\text{Tr}(P) = k$; we have $\norm{P_t - P_{t + \Delta t}}_F^2 = \text{Tr}(P_t) + \text{Tr}(P_{t+\Delta t}) - 2\text{Tr}(P_t P_{t +\Delta t}) = 2k - 2\text{Tr}(P_t P_{t +\Delta t})$.
}
which ranges from 0 (orthogonal) to 1 (identical), is directly related to the geometric overlap between two subspaces. Specifically, $\text{Tr}(P_t P_{t+\Delta t})$ captures the $k$ principal angles, $\{\theta_1,\dots\theta_k\}$, between subspaces $P_t$ and $P_{t+\Delta t}$, in that
\vspace{-5pt}
\begin{align*}
    \text{Tr}(P_tP_{t+\Delta t}) = \sum_{i = 1}^k \cos^2{\theta_i} \leq k.
\end{align*}
\vspace{-5pt}
Thus, it provides a principled tool for empirically measuring the rate and nature of subspace evolution.

\paragraph{Empirical Evidence of Slow Drift.}

\begin{wrapfigure}{r}{0.5\textwidth}
    \vspace{-10pt} 
    \centering
    \includegraphics[width=0.5\textwidth]{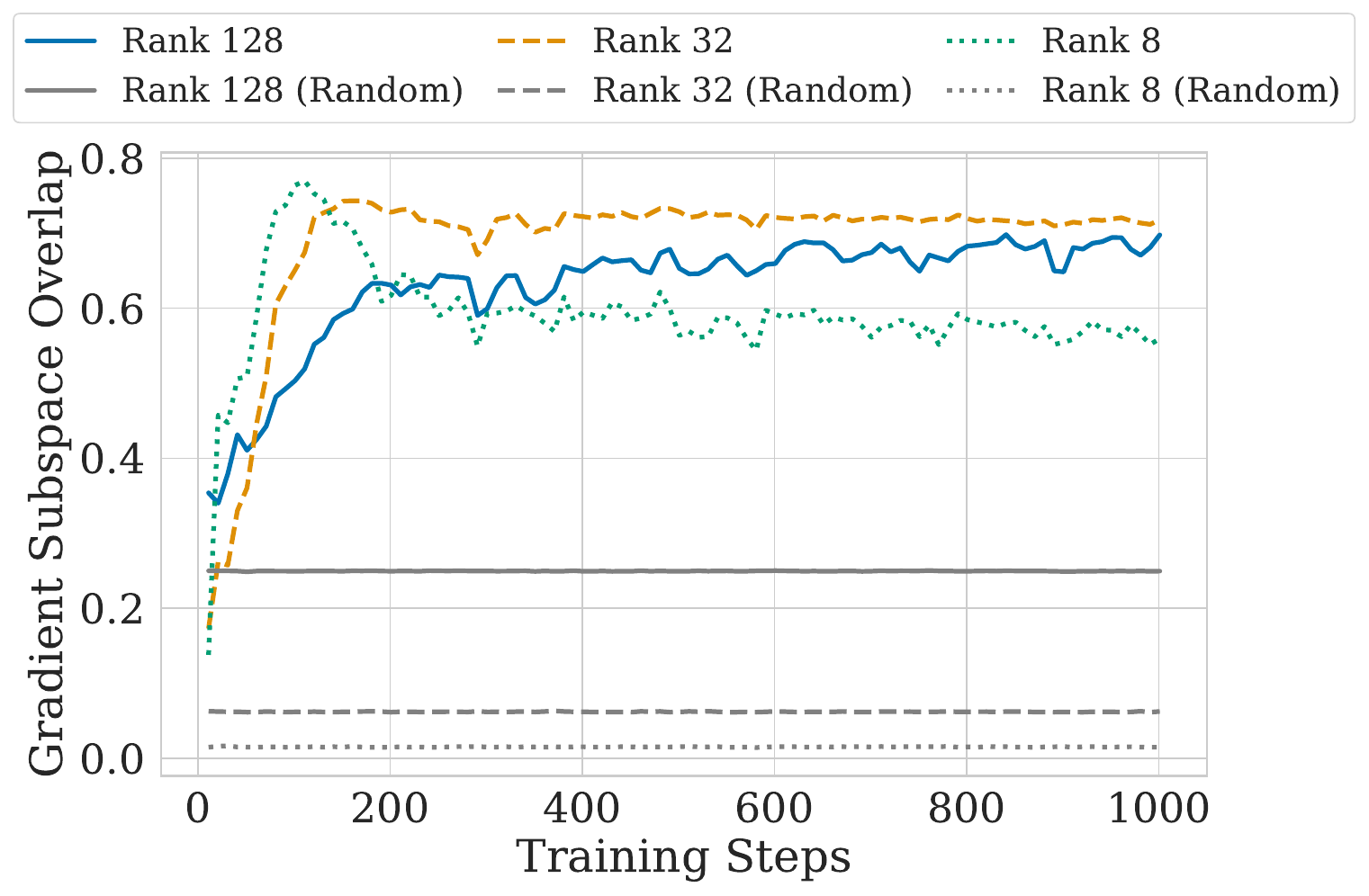}
    \caption{
        \textbf{Average gradient subspace overlap during pre-training.} We plot the overlap (\Cref{eq:subspace-overlap}) between subspaces at step $t$ and $t+\Delta t$ (with $\Delta t=10$), averaged over a 60M LLaMA model. The high, stable overlap, significantly above a random baseline, confirms that the subspace evolves gradually.
    }
    \label{fig:subspace-overlap}
\end{wrapfigure}

With the subspace overlap metric established, we empirically tested the stability assumption by measuring the overlap (with $\Delta t = 10$) throughout the pre-training of a 60M LLaMA model via the Adam optimizer on the C4 dataset
(further experiment details can be found in \Appref{app:exp-details}).
The results, shown in Figure~\ref{fig:subspace-overlap}, provide clear evidence for a ``slow drift'' hypothesis. After an initial transient phase, the average gradient subspace overlap across all tested ranks ($k \in \{8, 32, 128\}$) stabilizes at a high value between 0.55 and 0.75. Crucially, this is significantly higher than the overlap observed for random subspaces of the same rank, confirming the existence of meaningful temporal structure. This indicates: (i) \textbf{The static subspace assumption is false}, as the overlap is consistently below 1.0, and the change over just 10 steps is substantial. (ii) \textbf{The evolution is not erratic}, as the overlap remains remarkably stable, suggesting the subspace moves with a predictable ``velocity'' rather than teleporting randomly. (iii) \textbf{The phenomenon is fundamental}, as the stable drift is present across a range of ranks, suggesting it is an intrinsic property of the optimization trajectory. This temporal coherence is precisely the property that an online tracking algorithm can exploit.

\subsection{Online Subspace Tracking via Oja++ with Moment Transfer}
\begin{wrapfloat}{algorithm}{R}{0.52\textwidth}
\centering
\small
\captionof{algorithm}{Ori Projector Update (Oja++)}
\label{alg:update_projector}
\begin{algorithmic}[1]
\REQUIRE $Q \in \mathbb{R}^{d \times r}$, gradient $G \in \mathbb{R}^{d \times n}$,
$H_{\text{oja}}$: $N_{\text{updates}}, \rho, N_{\text{vecs}}$, LR schedule for $\eta$

\STATE \textbf{Function} \textsc{Update-Projector}$(Q,G,H_{\text{oja}})$
\STATE $Q_{\text{init}} \leftarrow \rho Q + (1-\rho)\mathrm{Randn}(d,r)$ \ALGCOMMENT{\Cref{eq:bootstrap}}
\STATE $Q_{\text{new}} \leftarrow \mathbf{0}_{d\times r}$
\STATE $r_{\text{curr}} \leftarrow 0$

\ALGCOMMENTLINE{Oja++ main update loop \citep{allen2017first}}
\FOR{$k = 0$ to $\log_2(r)-1$}
  \STATE $b \leftarrow r / 2^{k+1}$ \ALGCOMMENT{One block of Oja++ updates}
  \STATE $r_{\text{next}} \leftarrow r_{\text{curr}} + b$
  \STATE $Q_{\text{new}}(:,r_{\text{curr}}{:}r_{\text{next}})
  \leftarrow Q_{\text{init}}(:,r_{\text{curr}}{:}r_{\text{next}})$
  \STATE $Q_{\text{slice}} \leftarrow Q_{\text{new}}(:,0{:}r_{\text{next}})$
  \ALGCOMMENT{Current active slice}
  \STATE $Q_{\text{slice}} \leftarrow
  \textsc{Oja-Step}(Q_{\text{slice}},G,N_{\text{updates}},N_{\text{vecs}})$
  \STATE $Q_{\text{new}}(:,0{:}r_{\text{next}}) \leftarrow Q_{\text{slice}}$
  \STATE $r_{\text{curr}} \leftarrow r_{\text{next}}$
\ENDFOR

\STATE \textbf{return} $Q_{\text{new}}$
\STATE \textbf{End Function}

\ALGCOMMENTLINE{Subroutine: Oja updates, main update loop}
\STATE \textbf{Function} \textsc{Oja-Step}$(Q_{\text{in}},G,N_{\text{updates}},N_{\text{vecs}})$
\FOR{$j = 1$ to $N_{\text{updates}}$}
  \STATE $\eta \leftarrow \textsc{GetCosineAnnealingLR}(j)$
  \STATE $G_{\text{sub}} \leftarrow \textsc{SampleColumns}(G,N_{\text{vecs}})$
  \STATE $Q_{\text{in}} \leftarrow Q_{\text{in}}
  + \eta G_{\text{sub}}(G_{\text{sub}}^\top Q_{\text{in}})$ \ALGCOMMENT{\Cref{eq:oja-update}}
\ENDFOR
\STATE \textbf{return} $Q_{\text{in}}$
\STATE \textbf{End Function}
\end{algorithmic}
\end{wrapfloat}

The empirical validation of slow, structured subspace drift motivates a paradigm shift from periodic, expensive \textit{recomputation or adaption}, to continuous, lightweight \textit{online tracking}, leading naturally to the use of streaming $k$-PCA algorithms, for which Oja's method provides a classic foundation~\citep{oja1982simplified}. Its core is a simple multiplicative update that incrementally rotates the current basis $Q_{t-1}$ with a new data sample $x_t$ (in our case, derived from the gradient matrix $G_t$) to better align with new information in $x_t$, followed by a re-orthogonalization step (typically a QR decomposition) to maintain the basis properties. The fundamental update equation is:
\begin{equation} \label{eq:oja-update}
    Q_t \leftarrow \text{QR}\left( (I + \eta_t x_t x_t^\top) Q_{t-1} \right),
\end{equation}
where $\eta_t$ is a learning rate (typically controlled by a cosine scheduler) and QR denotes the QR decomposition used for re-orthogonalization. While standard Oja's algorithm initializes the process with a random matrix, our key innovation is to explicitly leverage the temporal coherence of the subspace drift. We introduce a bootstrapping mechanism that uses the final projector from the previous update step, $Q_{t-1}$, as a warm start for the current one. This is controlled by a drift coefficient $\rho$, which interpolates between the previous subspace and a new random Gaussian matrix $R\sim \gN(0, 1)$:
\begin{equation} \label{eq:bootstrap}
    Q_{\text{init}} = \rho Q_{t-1} + (1 - \rho) R.
\end{equation}
This matrix $Q_{\text{init}}$ then serves as the next starting point for the update rule in Equation~\ref{eq:oja-update}. 

While \Cref{eq:oja-update} and \Cref{eq:bootstrap} form the conceptual basis, we specifically employ a more robust variant, Oja++~\citep{allen2017first}, for its crucial theoretical advantage of \textbf{gap-free convergence}.
This means its convergence rate is independent of the eigengap (the difference between consecutive singular values of the gradient covariance matrix), a property vital for LLM training where such gaps can be small. The bootstrapping technique applies directly to Oja++ as well: the previous basis $Q_{t-1}$ serves as the warm start for the multi-stage Oja++ update. This combination of a theoretically sound, gap-free algorithm with a drift-aware bootstrapping strategy gives rise to the full form of our proposed optimizer, Ori, whose core projector update routine is detailed in \Cref{alg:update_projector}. For the algorithm pseudocode of Ori with AdamW,
we refer the readers to \Cref{alg:main_loop_detailed} in \Appref{app:algorithm-details}.
In particular, we provide an amortized runtime complexity analysis of Ori in \Appref{app:complexity}.

\paragraph{Applying Ori on Other Optimizers.}
To demonstrate that the design principles behind Ori can be applied to specialized, state-of-the-art low-rank optimizers other than GaLore, we examine LDAdam~\citep{robert2024ldadam}. LDAdam enhances the standard Adam optimizer by employing a projection-aware, momentum-based update alongside a generalized error-feedback mechanism. This design aligns with findings from MicroAdam~\citep{modoranu2024microadam}, which showed that applying error feedback at every iteration is essential for recovering Adam-like convergence at million- and billion-parameter scales (e.g., BERT/LLaMA). These techniques significantly enhance language model pre-training performance in extremely low-rank settings (e.g., rank 32 with an embedding size of 2048). However, LDAdam faces a critical barrier to practical deployment: it requires updating the projector at every training step ($T=1$) to maintain performance, incurring prohibitive computational costs. As shown in our later empirical results (\Cref{tab:pretrain-c4-ldadam}), when the update interval is increased to $T=200$ -- matching the standard schedule of GaLore -- LDAdam’s performance degrades significantly. We attribute this drop to LDAdam's reliance on a simple single-step power iteration method for subspace estimation, which we hypothesize is insufficient for tracking the drifting gradient subspace over large update intervals. Inspired by Ori, we introduce a multi-step stochastic power iteration mechanism into LDAdam’s update routine. We refer to this improved optimizer as O-LDAdam, reflecting its infusion of Ori’s online subspace tracking into LDAdam’s error-correcting framework. We provide further algorithmic details in \Appref{app:oLDAdam}.

\section{Dynamic Regret Analysis with Oja}\label{sec:dyn-regret-oja}

Our theoretical analysis departs from the classical static view of streaming $k$-PCA: rather than asking whether an online method converges to a fixed principal subspace, we ask whether it can track a drifting low-rank comparator under the notion of concept drift~\cite{crammer2010regret}. To this end, we aim to establish \textit{dynamic regret bounds}.

To formalize this question, we consider a nonstationary environment segmented into $S$ steps, each comprising $T$ sequential updates. Let $x_\zeta\in\R^d$ denote the streaming data at unified time index $\zeta\in[ST]$, where $\zeta=(s-1)T+t$ refers to update $t\in[T]$ inside step $s\in[S]$. We assume each $x_\zeta$ is drawn i.i.d from an unknown underlying distribution $\gD$. Let the step-$s$ optimal $k$-subspace be spanned by $V_s\in\R^{d\times k}$ with orthonormal columns. The dynamic comparator is a sequence of rank-$k$ orthogonal projectors $\{P_s\}_{s=1}^S$ which is $P_s=V_sV_s^\top$ with \emph{bounded drift} across consecutive steps:
\[
  \norm{P_{s}-P_{s+1}}_F \;\le\; \lambda,
  \qquad s\in[S-1].
\]
We now instantiate the analysis for multiplicative Oja updates on an explicit basis. The learner maintains an orthonormal basis $Q^{(\zeta)}\in\R^{d\times k}$ and updates with a noisy perturbation followed by reorthonormalization:
\begin{align}
  Q^{(\zeta+1)} \;=\; \mathrm{QR}\!\left(\big(I+\eta\,\hat g_\zeta\big)\,Q^{(\zeta)}\right),
  \qquad \hat g_\zeta \;=\; \hat x_\zeta \hat x_\zeta^\top, 
  \qquad \hat x_\zeta \;=\; x_\zeta + y_\zeta,
  \label{eq:oja}
\end{align}
where $y_\zeta$ is an additive (possibly adversarial) noise term. 
At each $\zeta$, the learner outputs a rank-$k$ projector $P^{(\zeta)}$, an orthonormal basis $Q^{(\zeta)}\in\R^{d\times k}$ with $P^{(\zeta)}=Q^{(\zeta)}(Q^{(\zeta)})^\top$. As in~\citep{marinov2018streaming}, we measure performance by the (variance) regret against the stepwise comparator:
\begin{align}
  R_S
  \;:=\;
  \sum_{\zeta=1}^{ST}\!\Big(x_\zeta^\top P_{s(\zeta)} x_\zeta - x_\zeta^\top P^{(\zeta)} x_\zeta\Big)  
  \;=\;
  \sum_{\zeta=1}^{ST}\!\inp{P_{s(\zeta)}-P^{(\zeta)}}{x_\zeta x_\zeta^\top},
  \label{eq:dyn-regret}
\end{align}
where $s(\zeta)=\lceil \zeta/T\rceil$. Since $\norm{x}_2^2 = \norm{x-Px}_2^2 + x^\top P x$ for any projector $P$, maximizing captured variance is equivalent to minimizing squared residual---so \Cref{eq:dyn-regret} is also the residual regret.

Establishing this measure of performance leads us to the following guarantee for bounding the residual regret in this setting, the proof of which is found in~\Appref{app:theo2}.

\begin{theorem}[Noisy Oja under $\lambda$-drift]\label{thm:oja}
For $\zeta \in [ST]$, let $Q^{(\zeta)}$ follow \Cref{eq:oja} with $\eta = \sqrt{\frac{\frac{k}{2}+(S-1)\sqrt{k}\lambda}{kd\cdot ST}}$, let
    $E_{\mathrm{tot}}:=\sum_{\zeta=1}^{ST} \|E_\zeta\|_2$ where $E_\zeta:=x_\zeta y_\zeta^\top+y_\zeta x_\zeta^\top+y_\zeta y_\zeta^\top$,
and suppose $\norm{x_\zeta}_2\le 1$. Then we have
\begin{equation}
  \E[R_S]
  \le
  2\sqrt{kd\cdot ST\!\left(\tfrac{k}{2} + (S-1)\sqrt{k}\,\lambda\right)} + 2k\,E_{\mathrm{tot}}.
  \label{eq:ojs-main-min}
\end{equation}
\end{theorem}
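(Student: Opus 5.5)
We treat the multiplicative Oja update as an instance of online mirror descent / matrix multiplicative weights on the spectraplex, and run a potential-function argument against a moving comparator. The potential is $\Phi_\zeta(P):=k-\inp{P}{P^{(\zeta)}}=\tfrac12\norm{P-P^{(\zeta)}}_F^2$, which lies in $[0,k]$ for rank-$k$ projectors.

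\textbf{Step 1: one-step progress inequality.} For a fixed rank-$k$ comparator $P$ and a single update \Cref{eq:oja} with $\norm{\hat x_\zeta}$ controlled, expand $(I+\eta\hat g_\zeta)Q^{(\zeta)}$. Its Gram matrix is $I+2\eta Q^\top\hat g Q+\eta^2Q^\top\hat g^2Q$, so the QR step (normalizing by its inverse square root) yields, to second order in $\eta$,
\[
P^{(\zeta+1)} = P^{(\zeta)} + \eta\big(\hat g P^{(\zeta)}+P^{(\zeta)}\hat g-2P^{(\zeta)}\hat gP^{(\zeta)}\big) + O(\eta^2\norm{\hat g}^2).
\]
Pairing with $P$ should give
\[
\Phi_{\zeta}(P)-\Phi_{\zeta+1}(P) \ge \eta\,\inp{P-P^{(\zeta)}}{\hat g_\zeta} - c\,\eta^2 d\cdot(\text{stuff}).
\]
The paper's constant suggests the second-order term is bounded so that summing gives $\eta\, kd\, ST$. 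I would aim for a per-step error of $\eta^2 kd$ in expectation, using $\norm{x}\le1$ and the fact that the curvature correction involves at most $k$ directions. In expectation over the i.i.d.\ $x_\zeta$ this is where the $d$ factor enters. I expect this to be the main obstacle: QR is not exactly a Bregman projection, so the second-order remainder must be controlled carefully. I would first try writing $P^{(\zeta+1)}$ exactly via $(I+\eta\hat g)Q(Q^\top(I+\eta\hat g)^2Q)^{-1}Q^\top(I+\eta\hat g)$ and bounding the inverse by $I-2\eta Q^\top\hat gQ$ in Loewner order.

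\textbf{Step 2: separating noise.} Write $\hat g_\zeta=x_\zeta x_\zeta^\top+E_\zeta$. Since $P-P^{(\zeta)}$ is a difference of rank-$k$ projectors, $|\inp{P-P^{(\zeta)}}{E_\zeta}|\le 2k\norm{E_\zeta}_2$. After dividing by $\eta$, this contributes exactly $2kE_{\mathrm{tot}}$. Rearranging gives
\[
\inp{P-P^{(\zeta)}}{x_\zeta x_\zeta^\top}\le \frac{\Phi_\zeta(P)-\Phi_{\zeta+1}(P)}{\eta}+\eta kd+2k\norm{E_\zeta}_2 .
\]

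\textbf{Step 3: telescoping with drift.} Set $P=P_{s(\zeta)}$ and sum. Within a block, the potentials telescope. At a block boundary we pay
\[
\Phi_{\zeta}(P_{s+1})-\Phi_\zeta(P_s)=\inp{P_s-P_{s+1}}{P^{(\zeta)}}\le\norm{P_s-P_{s+1}}_F\norm{P^{(\zeta)}}_F\le\sqrt{k}\lambda .
\]
The initial potential is at most $k$; taking $\Phi=\tfrac12\norm{\cdot}_F^2$ and the usual OMD initial-term normalization gives $k/2$. The total is
\[
\E[R_S]\le \frac{k/2+(S-1)\sqrt{k}\lambda}{\eta}+\eta\,kd\,ST+2kE_{\mathrm{tot}} .
\]
Plugging in the stated $\eta$ balances the first two terms to $2\sqrt{kd\,ST(k/2+(S-1)\sqrt k\lambda)}$, which is the claimed bound. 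Expectations are needed only in Step 1 for the second-order term, with $Q^{(\zeta)}$ independent of $x_\zeta$.
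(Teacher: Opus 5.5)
Your outline follows the paper's proof step for step: the same potential, the same H\"older bound $2k\norm{E_\zeta}_2$, the same $\sqrt{k}\lambda$ boundary cost, and the same balancing of $\eta$. The gap is Step~1, which you only assert. The obstruction is not the QR remainder; the inequality already fails at first order. Write $P:=P^{(\zeta)}$ for the iterate and $P_s$ for the comparator. Your own expansion gives
\[
\Phi_\zeta(P_s)-\Phi_{\zeta+1}(P_s)=\inp{P_s}{P^{(\zeta+1)}-P}=\eta\,\inp{(I-P)P_sP+PP_s(I-P)}{\hat g_\zeta}+O(\eta^2).
\]
So the potential only registers the tangent (off-diagonal) part of $P_s-P$. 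Meanwhile $\eta\inp{P_s-P}{\hat g_\zeta}$ exceeds this first-order progress by $\eta\bigl(\inp{(I-P)P_s(I-P)}{\hat g_\zeta}-\inp{P-PP_sP}{\hat g_\zeta}\bigr)$, which can be positive.

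Concretely, take $y_\zeta=0$, $x_\zeta\in\operatorname{range}(P_s)$ and $x_\zeta\perp\operatorname{range}(Q^{(\zeta)})$. Then $(I+\eta x_\zeta x_\zeta^\top)Q^{(\zeta)}=Q^{(\zeta)}$ exactly, so the potential does not move while the instantaneous regret is $\norm{x_\zeta}_2^2$. Step~1 would then assert $0\ge\eta\norm{x_\zeta}_2^2-c\,\eta^2kd$, which is false for the small $\eta$ the theorem prescribes. Averaging over i.i.d.\ data does not help (take $\gD$ a point mass). The inequality you want is the three-point inequality of projected gradient descent onto a convex set containing the comparator. Rank-$k$ projectors do not form such a set, and QR-retracted Oja is not mirror descent on the spectraplex.

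Borrowing the paper's argument does not close this. It obtains the same one-step bound from the tangent-space identity plus the claim $\inp{\Pi_{\mathsf{T}_P}(P_s-P)}{Z}\le\inp{P_s-P}{Z}$. But the decomposition offered for that claim produces one nonnegative and one nonpositive term, so nothing follows. Moreover, an upper bound on $d_{\zeta+1}^2-d_\zeta^2$ would need the reverse inequality, which is exactly what the example above violates. Indeed, take $d=2$, $k=1$, $\lambda=0$, $y_\zeta=0$, $x_\zeta\equiv e_2$ and $Q^{(1)}=e_1$. The iterates never move and $R_S=ST$, against a right-hand side of $2\sqrt{ST}$ (or $2\sqrt{2ST}$ with initial term $k$). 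So the bound fails for this start, and no argument uniform over orthonormal initializations, as yours and the paper's are, can establish it.

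What your Steps~1--3 do prove, essentially verbatim, is the analogous bound for a different algorithm: projected gradient on $\{W:0\preceq W\preceq I,\ \Tr W=k\}$, started at $W_1=\tfrac{k}{d}I$, with randomized rounding to a rank-$k$ projector. That starting point is where a legitimate $k/2$ comes from. For the Oja iterates, the initial potential can equal $k$ (take $\operatorname{range}(Q^{(1)})\perp\operatorname{range}(V_1)$); the paper's $k/2$ rests on the inadmissible $P^{(1)}=0$. In either case the second-order remainder scales with $\norm{\hat x_\zeta}_2^4$, which $\norm{x_\zeta}_2\le1$ alone does not bound.

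Two parts of your bookkeeping are sound. Your boundary telescoping, which pairs the end-of-block potential against $P_s$ with the next start against $P_{s+1}$, is the right linking argument. The paper drops the terminal potentials first, after which its bound on the sum of starting potentials does not follow. Your first-order coefficient $\eta$ is also correct; the paper's $2\eta$ is a factor-two slip in its block computation.
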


We note in the noiseless setting, i.e., $E_{\mathrm{tot}} = 0$, we achieve sublinear regret in terms of the total $ST$ updates. Additionally, our results establish a natural balance in the dependence between the number of steps and sequential updates, with the dependence of the regret on $S$ degrading as $\lambda$ increases.

\section{Experiments}

We aim to demonstrate that Ori: (i) achieves superior performance against existing low-rank optimizers in both pre-training and fine-tuning, particularly under extremely low rank scenarios; (ii) improves memory and computational efficiency; and (iii) derives its benefits from its core design principles, as shown through ablation studies.

\subsection{Baselines and Experimental Setup}  \label{sec:exp-baseline}

We evaluate Ori against a consistent set of state-of-the-art optimizers, all built upon the Adam optimizer. For each baseline, we adopt the hyperparameter configurations recommended in their respective original publications and official codebases. Full experimental details are provided in \Appref{app:exp-details}. Our baselines include: \textbf{Full-Rank Adam}, the standard optimizer serving as a practical upper bound on performance (using default $\beta_1=0.9, \beta_2=0.999$, and $\epsilon=10^{-8}$); \textbf{Weight-Subspace Methods}, represented by \textbf{LoRA}~\citep{hu2022lora} for fine-tuning (with a LoRA alpha of 32 and a dropout of 0.05) and its pre-training variant \textbf{ReLoRA}~\citep{lialin2023relora} (with optimizer states reset every 1000 steps and no full-rank warmup); and \textbf{Gradient-Subspace Methods}. This category includes \textbf{GaLore}~\citep{zhao2024galore}, which employs periodic SVD (update frequency $T=200$; pre-training scale $\alpha=0.25$; fine-tuning scale $\alpha \in \{2, 4\}$), and \textbf{OSD (Online Subspace Descent)}~\citep{liang2024memory}, another state-of-the-art method that performs gradient descent directly on the subspace manifold (similarly using $T=200$, pre-training $\alpha=0.25$, and fine-tuning $\alpha \in \{2, 4\}$).
Finally, \textbf{LDAdam}~\citep{robert2024ldadam}, which update its projection matrix via power iteration, alongside orthogonal techniques such as an error feedback mechanism and momentum-based state updates. Although the original LDAdam was designed exclusively for most frequent per-step updates ($T=1$), for the purpose of this work, we adapted it also to the efficient delayed update setting ($T=200$). For LDAdam's other hyperparameters, we use the values suggested by the authors 
(interpolation factor $\rho=0.908$, first step projection matrix initialized via SVD).

\begin{table*}[t]\small
\centering
\caption{Comparison with full- and low-rank optimizers on pre-training various sizes of LLaMA models on C4 dataset. Validation perplexity is reported (lower $\downarrow$ is better). $k$ is the rank for low-rank optimizers, and $d_{\text{model}}$ is the model embedding dimension.
We show the best scores in bold and second best ones with underlines.
\textbf{Ori is on-par with the best baseline on moderately high rank settings, and achieves superior performance when the rank $k$ is severely constrained.}
}
\label{tab:pretrain-c4}
\resizebox{\textwidth}{!}{%
\begin{tabular}{l rrr rrr rrr rrr}
\toprule
\multicolumn{1}{c}{\multirow{2}{*}{\textbf{Method}}} 
& \multicolumn{3}{c}{\textbf{60M} ($d_{\text{model}}=256$)} & \multicolumn{3}{c}{\textbf{130M} ($d_{\text{model}}=768$)} & \multicolumn{3}{c}{\textbf{350M} ($d_{\text{model}}=1024$)} & \multicolumn{3}{c}{\textbf{1B} ($d_{\text{model}}=2048$)} \\
\cmidrule(lr){2-4} \cmidrule(lr){5-7} \cmidrule(lr){8-10} \cmidrule(lr){11-13}
& $r=128$ & $r=32$ & $r=8$ & $r=256$ & $r=64$ & $r=16$ & $r=256$ & $r=64$ & $r=16$ & $r=512$ & $r=128$ & $r=32$ \\
\midrule
\multicolumn{13}{c}{\textit{\underline{Full-Rank Methods}}} \\
\addlinespace %
Adam                & & 39.03 & & & 27.60 & & & 20.68 & & & 19.92 \\
\midrule
\multicolumn{13}{c}{\textit{\underline{Low-Rank Methods}}} \\
\addlinespace %
ReLoRA~\citep{lialin2023relora}     & 144.48    & 103.96    & 103.99    & 85.53     & 85.54     & 85.54   & 72.59  & 72.72  & 72.69  & 65.82 & 65.83 & 65.79 \\
GaLore~\citep{zhao2024galore}      & \best{39.38} & 53.27  & 75.65     & \best{27.70} & \secbest{36.82}  & 60.29   & \best{21.26}  & \secbest{26.31}  & \secbest{41.25} & \best{16.87} & 26.49  & \secbest{33.82} \\
OSD~\citep{liang2024memory}          & 49.64     & \secbest{52.35} & \secbest{74.08} & 49.57   & 37.56     & \secbest{54.05}   & 28.28  & 37.13  & 50.08 & 59.72 & \secbest{25.43} & 40.63 \\
\midrule
\textbf{Ori (Ours)} & \secbest{40.38} & \best{48.84} & \best{65.03} & \secbest{29.08} & \best{35.22} & \best{43.61} & \secbest{21.52} & \best{26.11} & \best{39.24} & \secbest{18.50} & \best{21.61} & \best{29.14} \\
\midrule
Training Tokens & \multicolumn{3}{c}{0.44B} & \multicolumn{3}{c}{0.88B} & \multicolumn{3}{c}{2.5B} & \multicolumn{3}{c}{5.2B} \\
\bottomrule
\end{tabular}
} 
\end{table*}

\begin{table*}[t]\small
\centering
\caption{Comparison of O-LDAdam versus LDAdam~\citep{robert2024ldadam} with different subspace projection update frequencies ($T \in \{1, 200\}$) when pretraining LLaMA on C4 dataset. O-LDAdam improves LDAdam's single-step power iteration update with Ori-inspired multi-step, stochastic power iteration update. 
\textbf{In the very low-rank regimes (e.g. $r=32$ when $d_{\text{model}}=2048$), O-LDAdam with update frequency $T=200$ achieves on-par or even superior performance to LDAdam with per-step updates ($T=1$).}
}
\label{tab:pretrain-c4-ldadam}
\resizebox{\textwidth}{!}{%
\begin{tabular}{l rrr rrr rrr rrr}
\toprule
\multicolumn{1}{c}{\multirow{2}{*}{\textbf{Method}}} 
& \multicolumn{3}{c}{\textbf{60M} ($d_{\text{model}}=256$)} & \multicolumn{3}{c}{\textbf{130M} ($d_{\text{model}}=768$)} & \multicolumn{3}{c}{\textbf{350M} ($d_{\text{model}}=1024$)} & \multicolumn{3}{c}{\textbf{1B} ($d_{\text{model}}=2048$)} \\
\cmidrule(lr){2-4} \cmidrule(lr){5-7} \cmidrule(lr){8-10} \cmidrule(lr){11-13}
& $r=128$ & $r=32$ & $r=8$ & $r=256$ & $r=64$ & $r=16$ & $r=256$ & $r=64$ & $r=16$ & $r=512$ & $r=128$ & $r=32$ \\
\midrule
Adam                & & 39.03 & & & 27.60 & & & 20.68 & & & 19.92 \\
\midrule
LDAdam ($T = 1$)~\citep{robert2024ldadam}  & \best{34.57} & \best{36.19} & \secbest{39.90} & \secbest{29.32} & \secbest{30.91} & \secbest{30.02} & \best{19.97} & \best{23.52} & \secbest{23.67} & \best{17.69} & \best{16.29} & \secbest{20.27} \\
LDAdam ($T = 200$) & \secbest{37.75} & 47.90 & 69.74 & \best{27.64} & 30.95 & 41.16 & \secbest{22.38} & 25.26 & 32.03 & 161.62 & 162.92 & 45.66 \\
\midrule
\textbf{O-LDAdam (Ours)} ($T = 200$) & 41.60  & \secbest{38.52}  & \best{39.14} & 31.20 & \best{28.58} & \best{27.55} & 22.84 & \secbest{23.87} & \best{22.39} & \secbest{22.47} & \secbest{16.53} & \best{15.66} \\
\midrule
Training Tokens & \multicolumn{3}{c}{0.44B} & \multicolumn{3}{c}{0.88B} & \multicolumn{3}{c}{2.5B} & \multicolumn{3}{c}{5.2B} \\
\bottomrule
\end{tabular}
} 
\vspace{-10pt}
\end{table*}

\begin{table*}[t]\small
\centering
\caption{
Comparison with full- and low-rank optimizers on fine-tuning Roberta-Base on GLUE dataset. Average accuracy score and standard deviation is reported (lower $\downarrow$ is better) for each task, optimizer, and rank combination over three random seeds. We also report average accuracy over all tasks for each method.
We show the best scores in bold and second best ones with underlines.
\textbf{Ori consistently achieves superior performance on 10 out of 16 scenarios.}
}
\label{tab:finetune-glue} 
\resizebox{\textwidth}{!}{%
\begin{tabular}{l lrrrrrrrr r}
\toprule
\textbf{Method} & \textbf{Rank}  & \multicolumn{1}{c}{\textbf{CoLA}} & \multicolumn{1}{c}{\textbf{STS-B}} & \multicolumn{1}{c}{\textbf{MRPC}} & \multicolumn{1}{c}{\textbf{RTE}} & \multicolumn{1}{c}{\textbf{SST2}} & \multicolumn{1}{c}{\textbf{MNLI}} & \multicolumn{1}{c}{\textbf{QNLI}} & \multicolumn{1}{c}{\textbf{QQP}} & \multicolumn{1}{c}{\textbf{Avg.}} \\
\midrule
Adam & Full & 61.61{\scriptsize$\pm$0.46} & 90.88{\scriptsize$\pm$0.05} & 92.52{\scriptsize$\pm$0.12} & 78.45{\scriptsize$\pm$1.50} & 93.88{\scriptsize$\pm$0.67} & 87.41{\scriptsize$\pm$0.05} & 92.63{\scriptsize$\pm$0.37} & 89.21{\scriptsize$\pm$0.16} & 85.82\\
\midrule
LoRA~\citep{hu2022lora} & 8 & 41.85{\scriptsize$\pm$1.66} & 87.07{\scriptsize$\pm$0.25} & 87.95{\scriptsize$\pm$2.40} & 67.03{\scriptsize$\pm$2.40} & 93.12{\scriptsize$\pm$0.20} & 85.14{\scriptsize$\pm$0.10} & 90.62{\scriptsize$\pm$0.24} & 84.66{\scriptsize$\pm$0.14} & 79.68 \\
GaLore~\citep{zhao2024galore} & 8 & 58.24{\scriptsize$\pm$0.63} & 90.58{\scriptsize$\pm$0.07} & 91.06{\scriptsize$\pm$0.56} & 76.30{\scriptsize$\pm$1.46} & 93.88{\scriptsize$\pm$0.24} & 86.14{\scriptsize$\pm$0.14} & {92.41{\scriptsize$\pm$0.25}} & 86.10{\scriptsize$\pm$0.30} & 84.34 \\
OSD~\citep{liang2024memory} & 8 & \secbest{60.39{\scriptsize$\pm$0.31}} & \best{90.93{\scriptsize$\pm$0.19}} & \secbest{91.85{\scriptsize$\pm$0.50}} & \secbest{77.36{\scriptsize$\pm$1.46}} & {94.61{\scriptsize$\pm$0.30}} & {86.62{\scriptsize$\pm$0.10}} & 92.29{\scriptsize$\pm$0.16} & {87.66{\scriptsize$\pm$0.78}} & {85.16} \\
LD Adam ($T=1$) & 8 & 60.33{\scriptsize$\pm$1.07} & \secbest{90.84{\scriptsize$\pm$0.05}} & 91.49{\scriptsize$\pm$0.16} & 76.65{\scriptsize$\pm$0.74} & \secbest{94.64{\scriptsize$\pm$0.27}} & 86.98{\scriptsize$\pm$0.27} & \best{92.70{\scriptsize$\pm$0.31}} & \best{89.90{\scriptsize$\pm$1.33}} & \secbest{85.96} \\
LD Adam ($T=200$) & 8 & 59.07{\scriptsize$\pm$0.87} & 90.54{\scriptsize$\pm$0.07} & 91.05{\scriptsize$\pm$0.47} & 76.29{\scriptsize$\pm$1.33} & 94.61{\scriptsize$\pm$0.34} & \secbest{87.13{\scriptsize$\pm$0.07}} & 92.46{\scriptsize$\pm$0.14} & 88.54{\scriptsize$\pm$0.16} & 84.96  \\
\midrule
\textbf{Ori (Ours)}& 8  & \best{63.58{\scriptsize$\pm$0.93}} & {90.81{\scriptsize$\pm$0.21}} & \best{92.93{\scriptsize$\pm$0.23}} & \best{79.42{\scriptsize$\pm$1.08}} & \best{94.92{\scriptsize$\pm$0.54}} & \best{87.52{\scriptsize$\pm$0.05}} & \secbest{92.55{\scriptsize$\pm$0.07}} & \secbest{89.27{\scriptsize$\pm$0.04}} & \best{86.30} \\
\textbf{O-LDAdam (Ours)} & 8 & 58.38{\scriptsize$\pm$0.67} & 90.43{\scriptsize$\pm$0.09} & 91.18{\scriptsize$\pm$0.12} & 73.65{\scriptsize$\pm$2.06} & 94.57{\scriptsize$\pm$0.11} & \secbest{87.13{\scriptsize$\pm$0.13}} & 92.18{\scriptsize$\pm$0.25} & 88.32{\scriptsize$\pm$0.13} & 84.48 \\
\toprule
LoRA~\citep{hu2022lora} & 4 & 41.93{\scriptsize$\pm$1.20} & 86.77{\scriptsize$\pm$0.07} & 87.52{\scriptsize$\pm$0.98} & 66.30{\scriptsize$\pm$1.46} & 93.00{\scriptsize$\pm$0.30} & 84.62{\scriptsize$\pm$0.16} & 90.31{\scriptsize$\pm$0.12} & 84.37{\scriptsize$\pm$0.06} & 79.35 \\
GaLore~\citep{zhao2024galore} & 4 & {60.47{\scriptsize$\pm$2.16}} & {90.67{\scriptsize$\pm$0.23}} & {91.72{\scriptsize$\pm$0.48}} & 77.01{\scriptsize$\pm$2.33} & 94.19{\scriptsize$\pm$0.24} & {87.24{\scriptsize$\pm$0.15}} & {92.24{\scriptsize$\pm$0.14}} & {87.96{\scriptsize$\pm$0.10}} & 85.19\\
OSD~\citep{liang2024memory} & 4 & 59.93{\scriptsize$\pm$1.04} & 90.55{\scriptsize$\pm$0.16} & 91.67{\scriptsize$\pm$0.38} & \secbest{77.62{\scriptsize$\pm$2.01}} & \secbest{94.30{\scriptsize$\pm$0.35}} & 86.00{\scriptsize$\pm$0.19} & 92.17{\scriptsize$\pm$0.16} & 87.34{\scriptsize$\pm$0.47} & 84.95\\
LD Adam ($T=1$) & 4 & \best{61.24{\scriptsize$\pm$1.56}} & \secbest{90.78{\scriptsize$\pm$0.15}} & \secbest{91.81{\scriptsize$\pm$0.21}} & 77.01{\scriptsize$\pm$1.36}& \best{94.53{\scriptsize$\pm$0.46}} & \secbest{87.25{\scriptsize$\pm$0.11}} & \secbest{92.39{\scriptsize$\pm$0.14}} & \best{89.92{\scriptsize$\pm$1.35}} & \best{85.61} \\
LD Adam ($T=200$) & 4 & 58.48{\scriptsize$\pm$0.33} & 90.27{\scriptsize$\pm$0.10} & 91.51{\scriptsize$\pm$0.08} & 72.68{\scriptsize$\pm$0.45} & \secbest{94.45{\scriptsize$\pm$0.28}} & 87.07{\scriptsize$\pm$0.02} & 92.38{\scriptsize$\pm$0.18} & 88.45{\scriptsize$\pm$0.16} & 84.41  \\
\midrule
\textbf{Ori (Ours)} & 4 & \secbest{60.63{\scriptsize$\pm$0.40}} & \best{90.92{\scriptsize$\pm$0.11}} & \best{91.91{\scriptsize$\pm$0.41}} & \best{77.98{\scriptsize$\pm$1.30}} & {94.42{\scriptsize$\pm$0.24}} & \best{87.30{\scriptsize$\pm$0.13}} & \best{92.62{\scriptsize$\pm$0.23}} & \secbest{89.06{\scriptsize$\pm$0.15}} & \best{85.61}  \\
\textbf{O-LDAdam (Ours)} & 4 & 58.09{\scriptsize$\pm$0.75} & 90.38{\scriptsize$\pm$0.05} & 90.34{\scriptsize$\pm$0.67} & 71.12{\scriptsize$\pm$1.47} & 94.23{\scriptsize$\pm$0.20} & 87.06{\scriptsize$\pm$0.04} & 92.18{\scriptsize$\pm$0.06} & 88.38{\scriptsize$\pm$0.07} & 83.97 \\
\bottomrule
\end{tabular}
}
\vspace{-5pt}
\end{table*}

\subsection{Main results: LLaMA pre-training}
\label{sec:exp-pretrain}

We evaluate Ori's pre-training performance against the baselines on the C4 dataset~\citep{raffel2020exploring} using LLaMA-architecture models of four sizes (60M, 130M, 350M, and 1B), following similar experimental settings as prior work~\citep{lialin2023relora,zhao2024galore}. Departing from prior work that often tests a single rank, we systematically evaluate all low-rank methods across three distinct ranks for each model (e.g., $r \in \{512, 128, 32\}$ for the 1B model), enabling a rigorous assessment of optimizer performance under varying degrees of memory constraints. To ensure a fair comparison, model hyperparameters were held constant for each model size, while each optimizer's learning rate was tuned from $\{0.01, 0.005, 0.001, 0.0005, 0.0001\}$. We report validation perplexity after a fixed token budget, 
with full details provided in \Appref{app:exp-details}.

As shown in \Cref{tab:pretrain-c4}, Ori's performance is consistently strong, particularly in memory-constrained scenarios. 
At higher, less restrictive ranks (e.g., $r=128$ for the 60M model, $r=256$ for the 130M model), Ori performs on par with GaLore, establishing it as a highly competitive method in common low-rank settings. 
However, Ori's distinct advantage emerges when the rank $r$ is severely constrained, where it demonstrates the most robust performance as its accuracy degrades the least under memory pressure. This trend, where the cost of a stale subspace projection becomes most apparent, is consistent across all model scales. For instance, when training the 130M model at an aggressively low rank of just 16, Ori achieves a validation perplexity of 43.61, substantially outperforming GaLore (60.29) and OSD (54.05). This result empirically validates our central hypothesis: actively tracking the gradient subspace drift is most critical when the optimizer has the fewest dimensions to work with. By excelling in these challenging low-rank regimes, Ori consistently narrows the performance gap to full-rank Adam, pushing the performance boundaries of memory-efficient optimization.

\vspace{-5pt}
\paragraph{Generalizing to Error-Feedback Optimizers (O-LDAdam).} 
To test the generality of our subspace tracking approach, we benchmarked against LDAdam, a stronger baseline that utilizes error feedback and momentum buffering. While LDAdam achieves impressive performance when updating the projector at every step ($T=1$), this comes at a high computational cost. As shown in Table 2, simply increasing the update interval to $T=200$ (matching GaLore’s efficiency) causes LDAdam's performance to collapse (e.g., perplexity spikes from 20.27 to 45.66 on LLaMA-1B at $r=32$). However, by replacing LDAdam’s simple power iteration with our stochastic power iteration (O-LDAdam), we effectively bridge this gap. O-LDAdam ($T=200$) not only recovers the performance of the expensive LDAdam ($T=1$) baseline but frequently surpasses it in extremely low-rank regimes. Notably, on the 1B model at rank 32, O-LDAdam achieves a perplexity of 15.66, significantly outperforming the vanilla LDAdam ($T=1$) score of 20.27, all while maintaining the computationally efficient $T=200$ schedule. This confirms that the subspace tracking principles of Ori are orthogonal to error-feedback mechanisms and can be composed to create state-of-the-art low-rank optimizers.

\subsection{Finetuning}
\label{sec:exp-finetune}

To evaluate Ori's effectiveness on adaptation tasks, we fine-tuned a RoBERTa-Base model on the GLUE benchmark~\citep{wang2018glue}. For each task, we report the mean and standard deviation of task-specific metrics over three runs with different random seeds. Further details on evaluation metrics are provided in \Appref{app:exp-details}.

The fine-tuning results in Table 3 underscore Ori's prominent performance. Across both rank settings, Ori achieves the highest average score among all low-rank methods. At rank-8, Ori's average score of 86.30 not only surpasses GaLore (84.34) and LoRA (79.68) but also exceeds the full-rank Adam baseline (85.82). This strong performance continues at rank-4, where Ori (85.61) again leads all low-rank optimizers except for LDAdam with $T=1$. 
In particular,
Ori achieves the top score in 10 out of the 16 total task/rank combinations. Its particularly strong advantage on challenging tasks like CoLA (63.58 vs. GaLore's 58.24 at r=8) further validates its robustness and demonstrates that its drift-aware optimization is highly effective for both pre-training and adaptation tasks.

\subsection{Ablation Study}
\label{sec:exp-ablation}

\begin{figure}[t]
    \centering
    \begin{subfigure}[b]{0.48\textwidth}
        \centering
        \includegraphics[width=\textwidth]{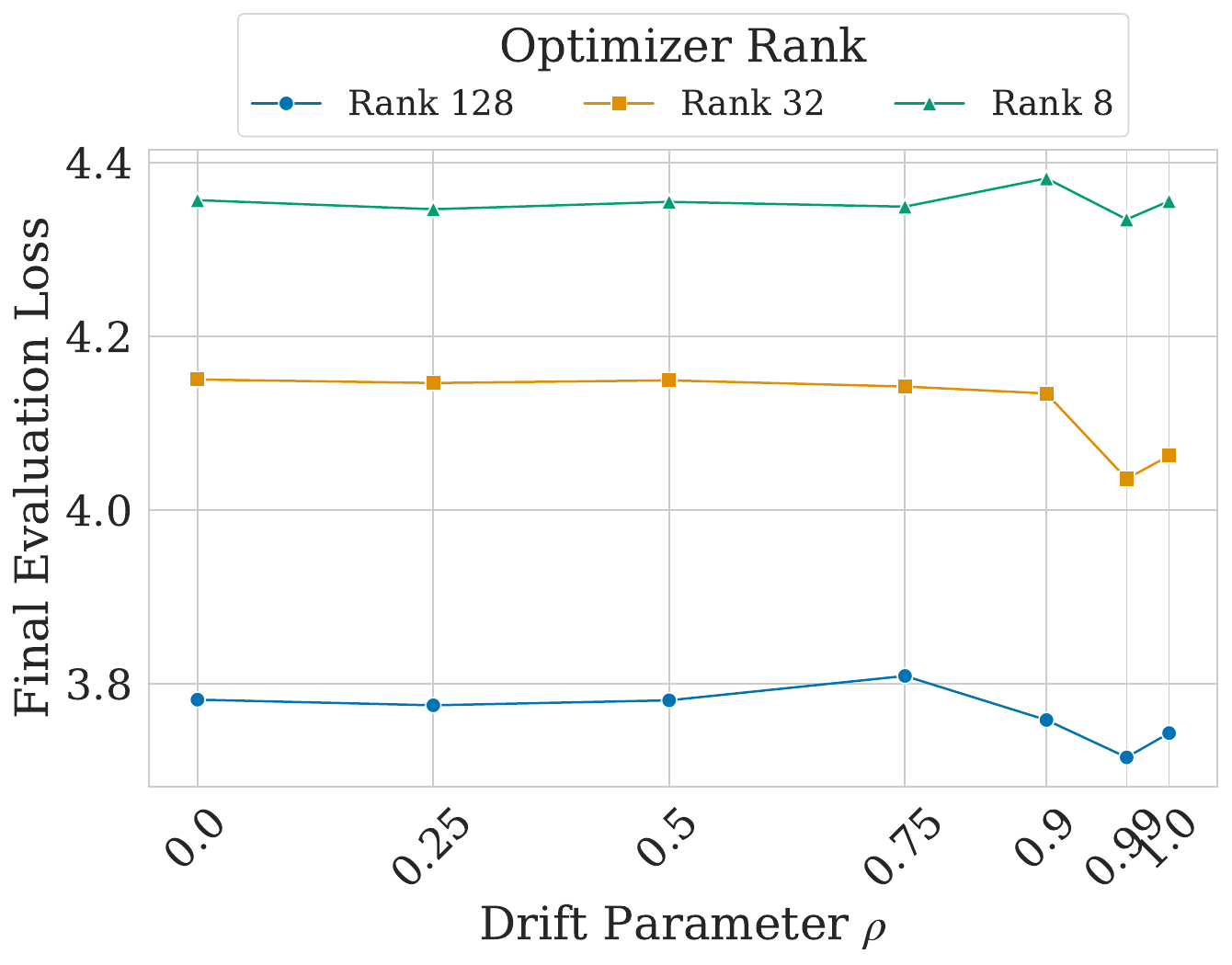}
        \caption{Impact of Drift Coefficient ($\rho$) on final evaluation loss. Lower $\downarrow$ is better.}
        \label{fig:ablation_rho}
    \end{subfigure}
    \hfill 
    \begin{subfigure}[b]{0.48\textwidth}
        \centering
        \includegraphics[width=\textwidth]{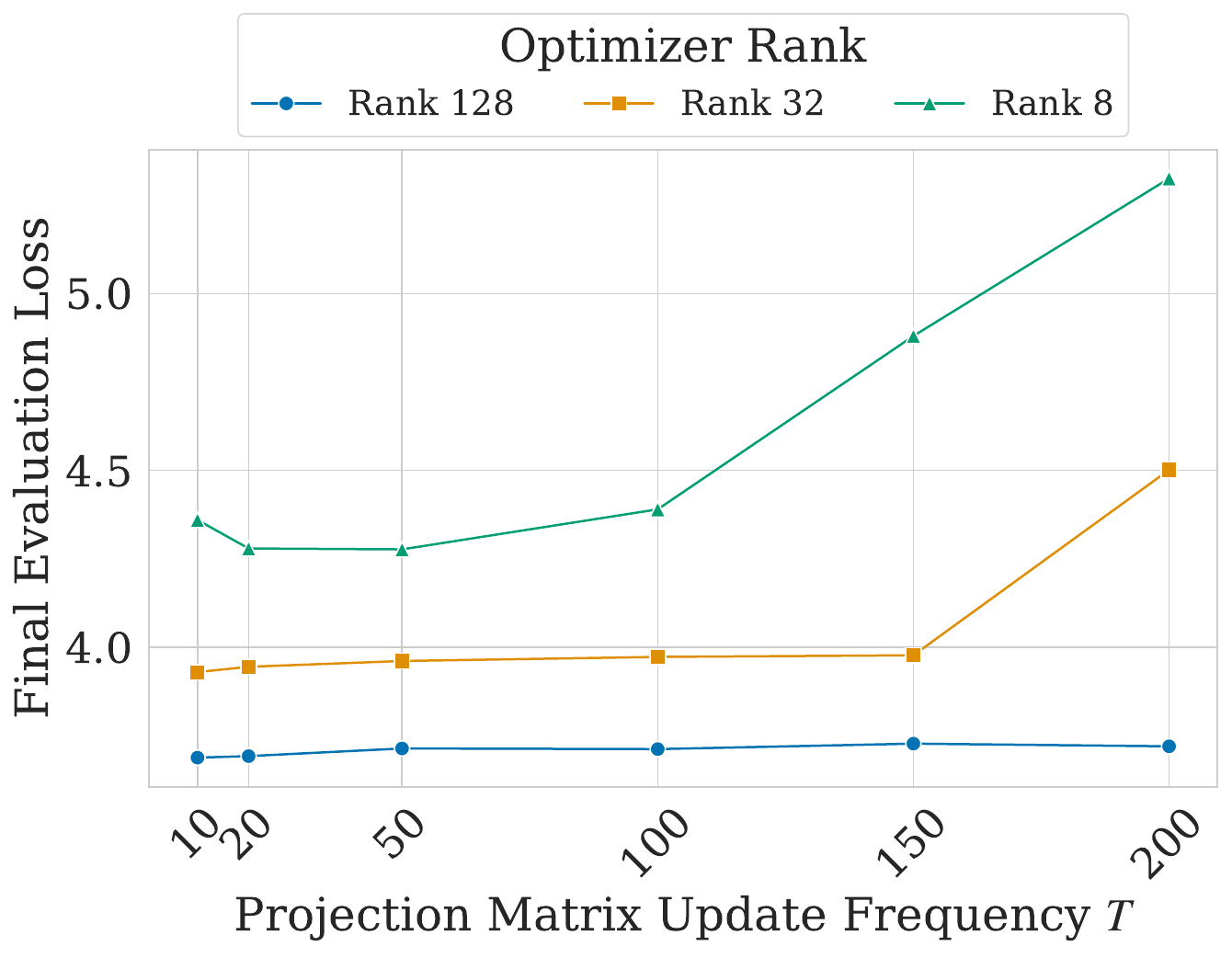}
        \caption{Impact of Update Frequency ($T_{\text{period}}$) on final evaluation loss. Lower $\downarrow$ is better.}
        \label{fig:ablation_freq}
    \end{subfigure}

    \begin{subfigure}{0.6\textwidth}\small
        \centering
        \caption{Oja++ vs. Standard Oja (Perplexity).}
        \label{tab:ablation_oja}
        \resizebox{0.9\textwidth}{!}{
        \begin{tabular}{lccc}
            \toprule
            \textbf{Algorithm} & \textbf{Rank (r=128)} & \textbf{Rank (r=32)} & \textbf{Rank (r=8)} \\
            \midrule
            Standard Oja & 41.10 & 52.86 & 96.13 \\
            \textbf{Oja++ (Ours)} & \textbf{40.38} & \textbf{48.84} & \textbf{65.03} \\
            \bottomrule
        \end{tabular}
        }
    \end{subfigure}
    \caption{
        Ablation Studies on the 60M LLaMA model pre-trained on C4 dataset for 0.44B tokens.
        (\subref{fig:ablation_rho}) \textbf{Varying the drift coefficient $\rho$ shows peak performance around $\rho=0.99$}, validating the importance of transferring optimizer statistics.
        (\subref{fig:ablation_freq}) \textbf{Performance degrades as the update frequency $T_{\text{period}}$ increases}, demonstrating the cost of a stale projector.
        (\subref{tab:ablation_oja}) \textbf{Oja++ consistently outperforms the standard Oja algorithm across all ranks.}
    }
    \label{fig:ablations}
    \vspace{-10pt}
\end{figure}

To isolate the contributions of Ori's core design components, we conduct a series of ablation studies, performed via the 60M LLaMA model pre-trained on the C4 dataset (\Cref{sec:exp-pretrain}). Details about the ablation experiment settings can be found in \Appref{app:ablation-details}.

\vspace{-10pt}
\paragraph{Impact of the Drift Coefficient ($\rho$).} We first analyze the effect of the drift coefficient $\rho$, which controls how much of the previous step's projector is carried over to initialize the current step's projector. As shown in \Cref{fig:ablation_rho}, we observe a clear and consistent trend across all ranks: a value of $\rho=0.99$ yields the best performance. This has two key implications. First, the significant performance drop at $\rho=0.0$ (a fully random restart) corroborates our central claim that transferring information about the previous subspace is crucial for effective optimization. Second, the slight performance dip at $\rho=1.0$ (a direct carryover with no noise) suggests that injecting a small amount of randomness is a beneficial regularization strategy. Overall, we choose $\rho=0.99$ for Ori in all pre-training and fine-tuning.

\vspace{-10pt}
\paragraph{Impact of Projector Update Frequency ($T$)} We test our hypothesis that frequent, lightweight updates are superior to infrequent, heavy ones by varying the update period from 10 to 200 (which matches GaLore's recommended schedule~\citep{zhao2024galore}). As shown in \Cref{fig:ablation_freq}, more frequent updates consistently lead to lower loss (better performance). This phenomenon is especially prominent in the extremely low-rank setting ($r=8$), where the cost of using a stale projector is most severe. 
This finding contrasts sharply with the ablation results from GaLore~\citep{zhao2024galore}, in which the authors reported no benefit from more frequent GaLore SVD updates. We hypothesize this is a unique advantage of Ori: because GaLore's periodic SVD acts as a ``cold restart'' that does not carry over subspace statistics, frequent updates may not be as beneficial. In contrast, since Ori is explicitly designed to track subspace drift, it can effectively leverage more frequent updates to maintain a higher-fidelity representation of the gradient landscape.
For our main experiments, we set $T = 20$, as it strikes an effective balance between model performance and computational overhead.

\vspace{-10pt}
\paragraph{Oja++ vs. Standard Oja.} To empirically justify choosing Oja++, we compare the performance of Oja++ against the standard Oja algorithm as the subroutine for updating the gradient projector. The results in \Cref{tab:ablation_oja} are unequivocal: Oja++ consistently outperforms standard Oja across all tested ranks. This aligns with prior theoretical work showing Oja++ has a provably better convergence rate in static streaming $k$-PCA settings~\citep{allen2017first}. Our results provide new empirical evidence that this advantage extends to the dynamic, non-stationary environment of LLM training, where Oja++'s superior tracking ability translates directly to better model performance.

\vspace{-10pt}
\paragraph{Memory and Computation Time.}
Finally, aside from the theoretical runtime complexity analysis in \Appref{app:complexity}, we provide an empirical memory and computation time comparison in \Appref{sec:exp-memory-time}.

\section{Conclusion}
We introduced Ori, a drift-aware low-rank optimizer that continuously tracks the gradient subspace with streaming $k$-PCA and preserves optimizer statistics across projector updates, mitigating staleness while keeping memory and compute low. Across theory (dynamic-regret under drift) and experiments, Ori consistently narrows the gap to full-rank training in the extremely low-rank regime while remaining faster and more memory-efficient than prior gradient-projection methods. 
We discuss limitations of our work and directions for future research, including behavior under heterogeneous training data and explicit optimizer state rotation across basis changes, in~\Appref{sec:limitations}.

\bibliography{references}
\bibliographystyle{plainnat}

\newpage
\appendix

\section{Proofs}  \label{app:proofs}

A central tool is the identity relating projectors and principal angles.

\begin{definition}[Principal angles]
For orthonormal $Q_1,Q_2\in\R^{d\times k}$, let $\Theta(Q_1,Q_2)=\diag(\theta_1,\ldots,\theta_k)$ with $\cos\theta_i$ the singular values of $Q_1^\top Q_2$.
\end{definition}

\begin{lemma}\label{lem:dist-identity} (Distance Identity.)
    Let $P_1 = Q_1 Q_1^\top$ and $P_2 = Q_2 Q_2^\top$ be two rank-$k$ orthogonal projection matrices, where $Q_1, Q_2 \in \R^{d\times k}$ are matrices with orthonormal columns. Then, we have the following:
    \begin{align*}
        \norm{P_1 - P_2}_F^2 = 2\norm{\sin{\Theta(Q_1, Q_2)}}_F^2.
    \end{align*}
\end{lemma}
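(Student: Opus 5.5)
We expand the squared Frobenius norm through the trace inner product and reduce everything to the singular values of $Q_1^\top Q_2$.

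\textbf{Step 1: expand the norm.} Since $P_1,P_2$ are symmetric and idempotent with $\Tr(P_i)=\Tr(Q_i^\top Q_i)=\Tr(I_k)=k$, we have
\begin{align*}
\norm{P_1-P_2}_F^2 = \Tr(P_1^2)+\Tr(P_2^2)-2\Tr(P_1P_2) = 2k-2\Tr(P_1P_2).
\end{align*}
This is the same computation as in the footnote to \Cref{eq:subspace-overlap}.

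\textbf{Step 2: rewrite the cross term.} By cyclicity of the trace,
\begin{align*}
\Tr(P_1P_2)=\Tr(Q_1Q_1^\top Q_2Q_2^\top)=\Tr\big((Q_1^\top Q_2)(Q_1^\top Q_2)^\top\big)=\norm{Q_1^\top Q_2}_F^2 .
\end{align*}
The squared Frobenius norm equals the sum of squared singular values. By the definition of principal angles these singular values are $\cos\theta_i$, so $\Tr(P_1P_2)=\sum_{i=1}^k\cos^2\theta_i$.

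\textbf{Step 3: combine.} Substituting into Step 1 gives
\begin{align*}
\norm{P_1-P_2}_F^2 = 2\sum_{i=1}^k(1-\cos^2\theta_i)=2\sum_{i=1}^k\sin^2\theta_i=2\norm{\sin\Theta(Q_1,Q_2)}_F^2 ,
\end{align*}
because $\sin\Theta$ is diagonal with entries $\sin\theta_i$.

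\textbf{Well-definedness.} The only subtle point is that the angles are well-defined, i.e.\ that each $\cos\theta_i$ lies in $[0,1]$. This holds because $\norm{Q_1^\top Q_2}_2\le\norm{Q_1}_2\norm{Q_2}_2=1$. Since $Q_1^\top Q_2$ is $k\times k$, there are exactly $k$ singular values, so the count matches the $k$ in $2k$. I do not expect any real obstacle; Step 2 is the one identity worth stating carefully.
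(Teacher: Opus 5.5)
Your proposal is correct and follows essentially the same route as the paper: expand $\norm{P_1-P_2}_F^2$ via the trace, use idempotence and $\Tr(P_i)=k$, rewrite $\Tr(P_1P_2)=\norm{Q_1^\top Q_2}_F^2=\sum_i\cos^2\theta_i$, and conclude. Your remark that the singular values of $Q_1^\top Q_2$ lie in $[0,1]$, so the angles are well-defined, is a small check the paper leaves implicit.
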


\begin{proof}
Provided in~\Cref{app:proof-dist-identity}.
\end{proof}

\Cref{lem:dist-identity} converts the drift assumption into an angle bound:
\begin{equation}
  \norm{\sin\Theta(V_s,V_{s-1})}_F^2 \;\le\; \lambda^2/2.
  \label{eq:angle-drift}
\end{equation}

\paragraph{Per-step potential drop.}
Define the misalignment potential $d_\zeta^2:=\norm{\sin\Theta(Q^{(\zeta)},V_s)}_F^2$. Analyses of noisy Oja yield a one-step inequality that links potential drop to the instantaneous regret:
\begin{equation}
  \E\!\left[d_{\zeta+1}^2\right]-d_\zeta^2
  \;\le\;
  -\,2\eta\,\inp{P_s-P^{(\zeta)}}{\hat g_\zeta} \;+\; \gO(\eta^2),
  \qquad P^{(\zeta)}:=Q^{(\zeta)}(Q^{(\zeta)})^\top,
  \label{eq:oja-descent-sketch}
\end{equation}
where the $\gO(\eta^2)$ term represents the second-order error from the Taylor expansion of the potential function. 
As detailed in~\Cref{app:per-step-drop-detailed}, the magnitude of this geometrically-induced error is bounded by a constant proportional to $\eta^2\norm{\hat{g}_\zeta}_F^2$. This quantity is, in turn, controlled by our assumption of bounded input data ($\norm{x_\zeta}_2\leq 1$), which ensures the stability of the analysis. 

Decomposing $\hat g_\zeta = x_\zeta x_\zeta^\top + E_\zeta$ with the error term $E_\zeta:=x_\zeta y_\zeta^\top+y_\zeta x_\zeta^\top+y_\zeta y_\zeta^\top$, we get
\[
  \inp{P_s-P^{(\zeta)}}{\hat g_\zeta}
  \;=\;
  \inp{P_s-P^{(\zeta)}}{x_\zeta x_\zeta^\top}
  \;+\;
  \inp{P_s-P^{(\zeta)}}{E_\zeta}.
\]
Hölder's inequality and $\|P_s\|_*=\|P^{(\zeta)}\|_*=k$ imply
$|\inp{P_s-P^{(\zeta)}}{E_\zeta}|\le \|P_s-P^{(\zeta)}\|_*\,\|E_\zeta\|_2 \le 2k\,\|E_\zeta\|_2$.
Rearranging \Cref{eq:oja-descent-sketch} and using $\gO(\eta^2)\le 4kd\cdot \eta^2$ gives
\begin{equation}
  \inp{P_s-P^{(\zeta)}}{x_\zeta x_\zeta^\top}
  \;\le\;
  \frac{d_\zeta^2 - \E[d_{\zeta+1}^2]}{2\eta} + 2kd\eta + 2k\,\|E_\zeta\|_2.
  \label{eq:per-iter-regret}
\end{equation}
Detailed proof for \Cref{eq:per-iter-regret} is shown at \Cref{app:per-step-drop-detailed}. Summing \Cref{eq:per-iter-regret} over $\zeta\in\{(s-1)T+1,\dots,sT\}$ yields the per-step bound
\begin{equation}
  R_s
  \;\le\;
  \frac{d_{(s-1)T+1}^2 - \E\!\left[d_{sT+1}^2\right]}{2\eta} \;+\; 2kd\eta T \;+\; 2k\,E_s,
  \qquad
  E_s:=\sum_{\zeta=(s-1)T+1}^{sT}\!\!\|E_\zeta\|_2.
  \label{eq:ojs-step}
\end{equation}

\paragraph{Linking steps via drift in angle space.}
By \Cref{lem:dist-identity}, we have that
\[
  d_s(V_s)^2 - d_s(V_{s-1})^2
  \;=\;
  \tfrac{1}{2}\!\left(\norm{P_Q-P_s}_F^2 - \norm{P_Q-P_{s-1}}_F^2\right)
  \;\le\; \sqrt{k}\,\lambda,
\]
where $P_Q:=Q^{((s-1)T+1)}(Q^{((s-1)T+1)})^\top$. With any orthonormal initialization $Q^{(1)}$, we have $d_1(V_1)^2\le k$.

\subsection{Proof of Lemma ~\ref{lem:dist-identity}: Distance Identity}\label{app:proof-dist-identity}
\begin{proof}
    \begin{align*}
        \norm{P_1 - P_2}_F^2 = Tr\left((P_1 - P_2)^\top(P_1 - P_2)\right) = Tr(P_1^2 - P_1P_2 - P_2P_1 + P_2^2)
    \end{align*}
    Since $P_1$ and $P_2$ are orthogonal projectors, they are symmetric $P_i^\top = P_i$ and idempotent ($P_i^2 = P_i$). The trace of a rank-$k$ projector is its rank, so $Tr(P_1) = Tr(P_2) = k$. The expression simplifies to:
    \begin{align*}
        ||P_1 - P_2||_F^2 = \text{Tr}(P_1) + \text{Tr}(P_2) - 2\text{Tr}(P_1 P_2) = 2k - 2\text{Tr}(P_1 P_2).
    \end{align*}
    The trace term can be rewritten as:
    \begin{align*}
        \text{Tr}(P_1 P_2) = \text{Tr}(Q_1 Q_1^\top Q_2 Q_2^\top) = \text{Tr}(Q_2^\top Q_1 Q_1^\top Q_2) = ||Q_1^\top Q_2||_F^2.
    \end{align*}
    The singular values of the matrix $Q_1^\top Q_2$ are the cosines of the principal angles, $\{\cos\theta_i\}_{i=1}^k$. The squared Frobenius norm is the sum of the squares of the singular values:
    \begin{align*}
        \norm{Q_1^\top Q_2}_F^2 = \sum_{i=1}^k \cos^2 \theta_i.
    \end{align*}
    Substituting this back gives:
    \begin{align*}
        \norm{P_1 - P_2}_F^2 = 2k - 2\sum_{i = 1}^k\cos^2\theta_i = 2\sum_{i=1}^k(1 - \cos^2\theta_i) = 2\sum_{i=1}^k\sin^2\theta_i = 2\norm{\sin{\Theta(Q_1, Q_2)}}_F^2.
    \end{align*}
\end{proof}

\subsection{Detailed proof of~\eqref{eq:oja-descent-sketch} and~\eqref{eq:per-iter-regret}(one-step potential drop for Oja)}\label{app:per-step-drop-detailed}

Recall $P^{(\zeta)} = Q^{(\zeta)}(Q^{(\zeta)})^\top$, $P_s = V_sV_s^\top$, and define the misalignment potential
\[
d_\zeta^2 \;:=\; \norm{\sin\Theta\!\big(Q^{(\zeta)},V_s\big)}_F^2 \;=\; \tfrac{1}{2}\norm{P^{(\zeta)}-P_s}_F^2.
\]
Denote $P:=P^{(\zeta)}$ and $Z:=\hat g_\zeta$ for brevity, where $Z$ is symmetric. Oja's update reads
\begin{equation}
Q^{(\zeta+1)} \;=\; \mathrm{QR}\!\Big((I+\eta Z)\,Q^{(\zeta)}\Big), 
\qquad P^{(\zeta+1)} \;=\; Q^{(\zeta+1)}(Q^{(\zeta+1)})^\top.
\label{eq:app-oja}
\end{equation}
Here is the proof of~\eqref{eq:oja-descent-sketch}
\[
\E\!\left[d_{\zeta+1}^2\right] - d_\zeta^2
\;\le\;
-\,2\eta\,\inp{P_s - P}{Z} \;+\; \gO(\eta^2),
\]
\begin{proof}
    Consider the pre-retraction iterate $\widetilde Q=(I+\eta Z)Q^{(\zeta)}$ and its (non-idempotent) projector $\widetilde P:=\widetilde Q\widetilde Q^\top$:
\begin{align}
\widetilde P
&= (I+\eta Z)\,P^2\,(I+\eta Z)
= (I+\eta Z)\,P\,(I+\eta Z)
= P \;+\; \eta(ZP+PZ) \;+\; \eta^2 Z P Z.
\label{eq:app-preQR}
\end{align}
Let $\mathsf{T}_P:=\{\Delta=\Delta^\top:\ P\Delta P=0,\ (I-P)\Delta(I-P)=0\}$ denote the tangent space of the Grassmannian at $P$.
Projecting the first-order term onto $\mathsf{T}_P$ gives the canonical tangent increment
\begin{equation}
\Delta_P \;:=\; \Pi_{\mathsf{T}_P}\big(ZP+PZ\big) \;=\; (I-P)\,Z\,P \;+\; P\,Z\,(I-P).
\label{eq:app-tan}
\end{equation}
Standard properties of retractions (here, $\mathrm{QR}$) imply that the reorthonormalized projector satisfies
\begin{equation}
P^{(\zeta+1)} \;=\; P \;+\; \eta\,\Delta_P \;+\; \gO(\eta^2),
\label{eq:app-retr}
\end{equation}
i.e., retraction preserves the first-order tangent component and pushes all curvature effects into $\gO(\eta^2)$; which is a similar "one-step" analysis strategy of Appendix i.B in \citet{allen2017first}.

Define $\phi(P)=\tfrac{1}{2}\norm{P-P_s}_F^2$, whose Euclidean gradient is $\nabla\phi(P)=P-P_s$. By expanding, we have
\begin{align}
d_{\zeta+1}^2 - d_\zeta^2
&= \phi\!\big(P^{(\zeta+1)}\big)-\phi(P)
= \inp{P-P_s}{P^{(\zeta+1)}-P} \;+\; \frac{1}{2}\norm{P^{(\zeta + 1)} - P}_F^2
\nonumber\\
&= \eta\,\inp{P-P_s}{\Delta_P} \;+\; \frac{\eta^2}{2}\norm{\Delta_P}_F^2.
\label{eq:app-diff}
\end{align}
\\Write
\[
\Delta_P \;=\; (I-P)ZP + PZ(I-P) \;=\; ZP + PZ - 2PZP.
\]
A direct computation (e.g., in a block basis that diagonalizes $P$) yields the identity
\begin{equation}
\inp{P-P_s}{\Delta_P}
\;=\;
-\,2\,\inp{\Pi_{\mathsf{T}_P}(P_s - P)}{Z},
\qquad
\Pi_{\mathsf{T}_P}(P_s-P) \;=\; (I-P)P_sP + P P_s(I-P).
\label{eq:app-key-id}
\end{equation}
\emph{Proof of \eqref{eq:app-key-id} (sketch).}
Work in the orthonormal basis $[Q\ \ Q_\perp]$ with $P=\diag(I_k,0)$
and decompose
\(
Z=\begin{psmallmatrix}A & B^\top \\ B & C\end{psmallmatrix},
\;
P_s=\begin{psmallmatrix}X & Y^\top \\ Y & Z_0\end{psmallmatrix}.
\)
Then
\(
\Delta_P=\begin{psmallmatrix}0 & B^\top\\ B & 0\end{psmallmatrix}
\)
and
\(
\Pi_{\mathsf{T}_P}(P_s-P)=\begin{psmallmatrix}0 & Y^\top\\ Y & 0\end{psmallmatrix}.
\)
Hence
\(
\inp{P-P_s}{\Delta_P}
= -2\,\Tr(Y^\top B)
= -\,2\,\inp{\Pi_{\mathsf{T}_P}(P_s-P)}{Z}.
\)
\hfill$\square$

Combining \eqref{eq:app-diff} and \eqref{eq:app-key-id} we obtain
\begin{equation}
d_{\zeta+1}^2 - d_\zeta^2
\;=\;
-\,2\eta\,\inp{\Pi_{\mathsf{T}_P}(P_s - P)}{Z} \;+\; \gO(\eta^2).
\label{eq:app-tangent-form}
\end{equation}

We now relate $\inp{\Pi_{\mathsf{T}_P}(P_s-P)}{Z}$ to the full inner product $\inp{P_s-P}{Z}$ that appears in the instantaneous regret.
Observe that
\begin{align}
\inp{P_s-P}{Z}
&= \inp{\Pi_{\mathsf{T}_P}(P_s-P)}{Z}
\;+\; \underbrace{\inp{(I-P)(P_s-P)(I-P)}{Z}}_{\ge 0 \text{ if } Z\succeq 0}
\;+\; \underbrace{\inp{P\,(P_s-P)\,P}{Z}}_{\le 0 \text{ if } Z\succeq 0}
\nonumber\\
&= \inp{\Pi_{\mathsf{T}_P}(P_s-P)}{Z}
\;+\; \inp{(I-P)P_s(I-P)}{Z}
\;-\; \inp{P - P P_s P}{Z}.
\label{eq:app-split}
\end{align}
Since $0\preceq P_s\preceq I$ and $Z\succeq 0$, both $(I-P)P_s(I-P)$ and $(P - P P_s P)$ are positive semidefinite, and hence the last line of \eqref{eq:app-split} shows
\begin{equation}
\inp{\Pi_{\mathsf{T}_P}(P_s-P)}{Z}
\;\le\;
\inp{P_s-P}{Z}
\qquad \text{whenever } Z\succeq 0.
\label{eq:app-monotone}
\end{equation}
Plugging \eqref{eq:app-monotone} into \eqref{eq:app-tangent-form} yields
\begin{equation}
d_{\zeta+1}^2 - d_\zeta^2
\;\le\;
-\,2\eta\,\inp{P_s - P}{Z} \;+\; \gO(\eta^2),
\qquad (Z\succeq 0).
\label{eq:app-main-drop}
\end{equation}

To be explicit about the control of the $\gO(\eta^2)$ higher-order terms, we trace their origin. The term $P^{(\zeta +1)} -P$ in the Taylor expansion represents the full update step on the manifold. As established in \eqref{eq:app-preQR} and the subsequent discussion on retractions, this step can be decomposed into a first-order tangent component $\eta \Delta_P$, and higher-order components. These higher-order components, which we collect into the $\gO(\eta^2)$ term, are a direct consequence of two factors: (i) the explicit algebraic term $\eta^2ZPZ$ that arises from the pre-retraction update, and (ii) the intrinsic curvature of the manifold, which means the path defined by the QR retraction deviates from the straight-line path of the tangent direction.

Recalling the algebraic second-order term, we have
\begin{align*}
    \tilde{P} = P + \eta(ZP+PZ) + \eta^{2}ZPZ,\quad \norm{\eta^2 ZPZ}_F \le \eta^2 \norm{Z}_F \norm{P}_F \norm{Z}_F = \eta^2 \norm{P}_F \norm{Z}_F^2.
\end{align*}
Since $P$ is a rank-k orthogonal projector, its Frobenius norm is a constant: $\norm{P}_F = \sqrt{Tr(P^\top P)} = \sqrt{Tr(P)} = \sqrt{k}$. Therefore:
\begin{align*}
    \norm{\eta^2ZPZ}_F \leq \sqrt{k}\cdot \eta^2\norm{Z}_F^2.
\end{align*}

For geometric second-order term, we need to relate $\|\Delta_P\|_F^2$ to $\|Z\|_F^2$. From~\eqref{eq:app-tan}, we have $\Delta_P = (I - P) Z P + P Z (I - P)$.
Using the triangle inequality and the sub-multiplicativity of the Frobenius norm, we get:
\begin{align*}
\norm{\Delta_P}_F \leq \|(I - P) Z P\|_F + \|P Z (I - P)\|_F \leq 2 \|P\|_F \|I - P\|_F \|Z\|_F.
\end{align*}

Since $P$ is a rank-$k$ projector and $I - P$ is a rank-$(d - k)$ projector, their norms are constants 
\begin{align*}
\|P\|_F = \sqrt{k} \quad \text{and} \quad \|I - P\|_F = \sqrt{d - k},
\end{align*}
respectively. This means $\|\Delta_P\|_F$ is bounded by a constant times $\|Z\|_F$. Consequently, the second-order geometric error $\eta^2\|\Delta_P\|_F^2$ is bounded by $(4kd - 4k^2)\eta^2\|Z\|_F^2$. In total, we have $\gO(\eta^2)\leq 4kd\cdot \eta\norm{Z}_F^2$.
The norm of these higher-order terms is therefore fundamentally governed by the norm of the update matrix $Z = \hat{g}_\zeta$. 
Under $\norm{x_\zeta}_2\le 1$ (so $\norm{Z}_F=\norm{x_\zeta x_\zeta^\top}_F=\norm{x_\zeta}_2^2\le 1$) we have $\gO(\eta^2)\le 4kd\cdot \eta^2$. 
Finally, taking expectation over the randomness at time $\zeta$ (data and any internal noise) yields
\[
\E\!\left[d_{\zeta+1}^2\right] - d_\zeta^2
\;\le\;
-\,2\eta\,\inp{P_s - P}{\hat g_\zeta} \;+\; 4kd\cdot \eta^2,
\]
which is Eq.~\eqref{eq:oja-descent-sketch}.
\end{proof}

If $\hat g_\zeta = x_\zeta x_\zeta^\top + E_\zeta$ with an arbitrary symmetric perturbation $E_\zeta$, the argument above applies verbatim to the PSD part $x_\zeta x_\zeta^\top$; the additional term satisfies
\[
\big|\inp{P_s-P}{E_\zeta}\big|
\;\le\;
\norm{P_s-P}_*\,\norm{E_\zeta}_2
\;\le\; 2k\,\norm{E_\zeta}_2,
\]
since $\norm{P_s}_*=\norm{P}_*=k$. This gives the noise contribution used in the main text.
\qed

\subsection{Proof of Theorem \ref{thm:oja}}\label{app:theo2}
\begin{proof}
Fix a step $s\in[S]$ and write $P^{(\zeta)} = Q^{(\zeta)}(Q^{(\zeta)})^\top$ and $P_s=V_sV_s^\top$.
Define the misalignment potential
\(
  d_\zeta^2 := \norm{\sin\Theta(Q^{(\zeta)},V_s)}_F^2
  = \tfrac12 \norm{P^{(\zeta)}-P_s}_F^2
\)
(Lemma~\ref{lem:dist-identity}).
By the one-step potential drop for Oja (proved in \Appref{app:per-step-drop-detailed}), for the noisy update
\(
  Q^{(\zeta+1)}=QR\!\big((I+\eta\,\hat g_\zeta)Q^{(\zeta)}\big)
\)
with $\hat g_\zeta=\hat x_\zeta\hat x_\zeta^\top$ and $\norm{x_\zeta}_2\le 1$, we have
\begin{equation}\label{eq:step-drop-proof}
  \E\!\left[d_{\zeta+1}^2\right]-d_\zeta^2
  \;\le\;
  -\,2\eta\,\inp{P_s-P^{(\zeta)}}{\hat g_\zeta}
  \;+\; 4kd\eta^2.
\end{equation}
Decompose the noisy rank-one matrix as
\(
  \hat g_\zeta = x_\zeta x_\zeta^\top + E_\zeta,
\)
where $E_\zeta:=x_\zeta y_\zeta^\top+y_\zeta x_\zeta^\top+y_\zeta y_\zeta^\top$.
Using H\"older and $\|P_s\|_*=\|P^{(\zeta)}\|_*=k$,
\[
  \big|\inp{P_s-P^{(\zeta)}}{E_\zeta}\big|
  \;\le\; \|P_s-P^{(\zeta)}\|_*\,\|E_\zeta\|_2
  \;\le\; 2k\,\|E_\zeta\|_2.
\]
Rearranging \eqref{eq:step-drop-proof} gives the per-iteration regret bound
\begin{equation}\label{eq:per-iter-proof}
  \inp{P_s-P^{(\zeta)}}{x_\zeta x_\zeta^\top}
  \;\le\;
  \frac{d_\zeta^2 - \E[d_{\zeta+1}^2]}{2\eta}
  \;+\; 2kd\eta
  \;+\; 2k\,\|E_\zeta\|_2 .
\end{equation}

\paragraph{Within a step.}
Summing \eqref{eq:per-iter-proof} over $\zeta=(s-1)T+1,\dots,sT$ yields
\begin{equation}\label{eq:per-step-proof}
  R_s
  \;:=\; \sum_{\zeta=(s-1)T+1}^{sT}\!\inp{P_s-P^{(\zeta)}}{x_\zeta x_\zeta^\top}
  \;\le\;
  \frac{d_{(s-1)T+1}^2 - \E[d_{sT+1}^2]}{2\eta}
  \;+\; 2kd\eta
  \;+\; 2k\,E_s,
\end{equation}
where $E_s:=\sum_{\zeta=(s-1)T+1}^{sT}\|E_\zeta\|_2$.

\paragraph{Across steps (drift linking).}
Summing \eqref{eq:per-step-proof} over $s=1$ to $S$ gives
\begin{equation}\label{eq:sum-steps-raw}
  \E[R_S]
  \;\le\;
  2kd\cdot\eta T
  \;+\;
  \frac{1}{2\eta}
  \sum_{s=1}^{S}\!\Big(d_{(s-1)T+1}^2 - \E[d_{sT+1}^2]\Big)
  \;+\;
  2k\,E_{\rm tot},
  \qquad
  E_{\rm tot}:=\sum_{s=1}^S E_s.
\end{equation}
The terminal terms are non-positive and can be dropped:
$-\sum_{s=1}^S \E[d_{sT+1}^2]\le 0$.
It remains to upper bound
\(
\sum_{s=1}^{S} d_{(s-1)T+1}^2
\)
where each $d_{(s-1)T+1}^2$ is measured against the \emph{local} comparator $V_s$.
Let $P_Q:=Q^{((s-1)T+1)}\!(Q^{((s-1)T+1)})^\top$.
By Lemma~\ref{lem:dist-identity} and Cauchy-Schwarz inequality,
\[
  \begin{aligned} &d_{(s-1)T+1}(V_s)^2 - d_{(s-1)T+1}(V_{s-1})^2 \\ &\quad
  \;=\;
  \tfrac12\!\left(\|P_Q-P_s\|_F^2 - \|P_Q-P_{s-1}\|_F^2\right) \\ &\quad
  \;=\;
  \inp{P_Q}{P_{s-1}-P_s}
  \;\le\; \sqrt{k}\,\lambda, \end{aligned}
\]
using $\|P_Q\|_F=\sqrt{k}$ and the drift assumption $\|P_s-P_{s-1}\|_F\le\lambda$.
With the initialization $P^{(1)}=0$ (hence $d_1(V_1)^2=\tfrac12\|P_1\|_F^2=\tfrac{k}{2}$), we obtain
\begin{equation}\label{eq:start-sum}
  \sum_{s=1}^{S} d_{(s-1)T+1}^2
  \;\le\;
  \frac{k}{2} \;+\; (S-1)\sqrt{k}\,\lambda .
\end{equation}

\paragraph{Conclusion.}
Plugging \eqref{eq:start-sum} into \eqref{eq:sum-steps-raw} yields
\[
  \E[R_S]
  \;\le\;
  2kd\cdot\eta ST
  \;+\;
  \frac{\frac{k}{2} + (S-1)\sqrt{k}\,\lambda}{2\eta}
  \;+\;
  2k\,E_{\rm tot},
\]
which is (7).
Optimizing the right-hand side over $\eta>0$ gives
\(
  \eta^\star=\sqrt{\frac{\frac{k}{2}+(S-1)\sqrt{k}\lambda}{2kd\cdot ST}}
\)
and
\[
  \E[R_S]
  \;\le\;
  \sqrt{2kdST\!\left(\tfrac{k}{2} + (S-1)\sqrt{k}\,\lambda\right)}
  \;+\;
  2k\,E_{\rm tot},
\]
which is (8).
\end{proof}

\paragraph{Remark (initialization).}
If one starts Oja from any orthonormal $Q^{(1)}$ (so $P^{(1)}$ is a rank-$k$ projector), then $d_1(V_1)^2\le k$ and the bound above holds with $\tfrac{k}{2}$ replaced by $k$; using $P^{(1)}=0$ in the analysis yields the slightly tighter $\tfrac{k}{2}$ constant used in the theorem.

\section{More discussions on related work}  \label{app:more-related-work}

\subsection{Low-Rank Training Paradigms}

\paragraph{Parameter weights subspace projection based descent.} Recent research has explored learned subspace descent methods, where gradient updates are confined to a low-dimensional subspace of the model’s weights that is itself learned during training. A straightforward instance is to parameterize each weight matrix as a product of low-rank factors (e.g. $W = B A$ with $B \in \mathbb{R}^{m \times r}, A \in \mathbb{R}^{r \times n}$) and optimize those factors via gradient descent \citep{hu2022lora, kamalakara2022exploring,lv2024scalable}. While such plain low-rank factorization can significantly reduce parameters, it often falls short of full-rank performance \citep{zhao2024galore}, prompting more adaptive approaches. ReLoRA \citep{lialin2023relora} tackles this by periodically absorbing the learned low-rank updates into the main weights and introducing new factor matrices, gradually increasing the effective rank of $W$ over training to recover expressiveness. SLTrain \citep{han2024sltrain} takes a complementary route by representing each weight as the sum of a trainable low-rank matrix and a sparse matrix ($W = B A + S$) and jointly optimizing both; this hybrid decomposition boosts model capacity while still confining updates to a learned low-dimensional (and partially sparse) subspace. Other methods learn to adjust the subspace itself: AdaLoRA \citep{zhang2023adalora} dynamically adapts the rank of the low-rank adapters based on an importance metric (simulating SVD pruning of negligible directions), effectively learning the most productive subspace basis during training. Likewise, LORO \citep{mo2025parameter} directly optimizes the model on a low-rank manifold (using Riemannian optimization), ensuring all gradient steps stay within a trainable low-dimensional parameter space. Beyond pure low-rank factors, structured subspace strategies have been proposed: BTT \citep{qiu2024compute} introduces a block tensor train factorization (via Kronecker-product structured weight matrices) with the factor components learned by gradient descent, and CoMERA \citep{yang2024comera} reshapes weight matrices into higher-order tensors that are decomposed into smaller learned tensors, enforcing a compact learned subspace of parameters. All of these approaches maintain that the projection basis (e.g. the low-rank factors or tensor components) is updated by training, allowing the subspace to evolve to fit the data.

In contrast, other methods restrict gradient descent to a fixed or partially fixed subspace of the weights, where the projection basis remains static (or changes only via non-gradient steps) for some duration. SwitchLoRA \citep{zhou2024switchlora} exemplifies this paradigm by using LoRA-style low-rank adapters but periodically switching their basis vectors (columns of $B$ or rows of $A$) with new ones during training, instead of continuously refining the same basis. This schedule of refreshing the adapter subspace (inspired by ReLoRA’s rank-growth idea) lets the model incrementally cover diverse low-dimensional subspaces without explicitly learning the basis via backpropagation. Similarly, LoQT \citep{loeschcke2024loqt} employs a fixed-basis approach for efficiency: it initializes each weight with two low-rank factors $W \approx B A$, then freezes one factor (keeping it constant and even quantized to save memory) and only trains the other factor $B$. Periodically, the product $B A$ is added into the full weight matrix $W$ (and possibly $B$ is reinitialized), but the subspace defined by the fixed factor $A$ remains unchanged during those intervals. By operating in a predetermined low-rank subspace (or a sequence of static subspaces), SwitchLoRA and LoQT avoid the overhead of learning new basis directions, instead confining updates to coefficient adjustments within a fixed projector. This ensures that gradient descent explores a low-dimensional slice of the parameter space defined a priori (or piecewise fixed), in contrast to the above methods that co-learn the subspace itself during training.

\paragraph{Gradient subspace projection descent.} Recent research has explored gradient subspace projection descent methods to reduce memory and computation by restricting updates to a low-dimensional subspace. GaLore \citep{zhao2024galore} pioneered this approach, using SVD to project each gradient onto a top-$r$ singular vector subspace. This low-rank update dramatically lowers memory usage for gradients and optimizer states. Numerous extensions build on GaLore’s foundation. Fira \citep{chen2024fira} adds an orthogonal complement term to reintroduce full-rank gradient information, closing the performance gap to full Adam without extra memory. Natural GaLore \citep{das2024natural} integrates natural gradient preconditioning into GaLore’s update, while Q-GaLore \citep{zhang2024q} combines low-rank projection with low-bit quantization for further memory savings. Other variants modify how the subspace is selected. COAP \citep{xiao2025coap} learns a correlation-aware projection matrix for improved efficiency, and GRASS \citep{muhamed2024grass} constructs sparse projection bases guided by gradient row norms to accelerate training. FRUGAL \citep{zmushko2024frugal} performs an Adam-style update in the projected subspace and a simultaneous SGD step on the orthogonal complement, enabling updates across the full parameter space with minimal extra memory overhead. Tensor-GaLore \citep{george2025tensor} extends the low-rank projection principle to higher-order tensor parameters (beyond flat matrices), broadening applicability while preserving memory benefits.

\paragraph{Theory.}
We frame the phenomenon of gradient subspace drift within the broader theoretical context of concept drift in online learning. Foundational work in this area establishes regret guarantees for learners adapting to a slowly shifting target distribution~\citep{crammer2010regret}. This theoretical lens is highly relevant to LLM training, where empirical evidence shows that gradients concentrate in a low-dimensional, slowly evolving subspace~\citep{gur2018gradient, larsen2021many}. While the idea of a shifting target is well-established, our work is the first to formally analyze a low-rank optimizer through the framework of dynamic regret under concept drift. This provides a principled theoretical foundation for Ori's performance and differentiates it from prior methods that focused primarily on algorithmic or empirical contributions.

Meanwhile, some works address the computational cost of maintaining the subspace or explore dynamic subspace strategies. Galore+ \citep{liao2024galore} reduces SVD overhead by performing the decomposition on only a randomly chosen component (e.g. one attention head) per update, significantly speeding up training at the cost of a slight approximation error. SubTrack-Grad \citep{rajabi2025subtrack++} and Online Subspace Descent \citep{liang2024memory} 
both dynamically update the projection basis: they solve auxiliary optimization problems (e.g. least-squares fits) to adjust the subspace so that it continues to approximate the current gradient well over time. In contrast, Flora \citep{wang2024flora} and GoLore \citep{he2024subspace} propose using fixed random projections of gradients, achieving memory savings with competitive performance to GaLore but without expensive SVD computations. I3S \citep{zhang2025i3s} introduces an importance sampling subspace selection scheme, weighting basis directions by their singular values to adaptively refine GaLore’s subspace and further improve convergence. Finally, the recent GaLore 2 framework \citep{su2025galore} addresses practical challenges like SVD cost and parallelization integration (e.g. with FSDP), and demonstrates the scalability of subspace-projected training by pre-training a 7B-parameter LLM with up to 500B tokens using significantly less memory.

\subsection{Online Subspace Tracking}
\paragraph{Online k-PCA subspace descent related method.} Recent optimizers exploit low-dimensional subspaces to reduce memory cost in large-scale training via online $k$-PCA. For example, Subspace-Momentum confines the momentum vector to a fixed $k$-dimensional subspace (e.g., spanned by top singular directions) while performing standard SGD in the orthogonal complement, substantially lowering optimizer-state memory and admitting high-probability convergence guarantees under standard conditions~\citep{nguyen2024lean}. The subspace can be chosen once or periodically refreshed via streaming singular vectors (e.g., computing top-$k$ components every $G$ steps) to adapt to evolving gradients~\citep{nguyen2024lean}. 
Similarly, Online Subspace Descent forgoes expensive SVD-based updates by continuously adapting the projection matrix through an online $k$-PCA algorithm~\citep{liang2024memory}. This approach provides a convergence guarantee for arbitrary subspace update rules in such low-rank optimizers and maintains an up-to-date basis with minimal overhead, eliminating full SVD computations during training~\citep{liang2024memory}. Applied to LLM pre-training, the online subspace optimizer achieves lower perplexity than prior low-rank methods and narrows the performance gap to full-rank training on benchmarks~\citep{liang2024memory}. 

\subsection{Concept Drift in Online Learning}
\paragraph{Subspace learning.} A growing body of work indicates that SGD dynamics concentrate in a low-dimensional, slowly evolving subspace, motivating algorithms that explicitly track or exploit subspace drift during training. Theory shows: (i) gradient updates lie within a tiny active subspace, suggesting that learning can be captured by a small set of dominant directions~\citep{gur2018gradient}; (ii) online learning with concept drift admits regret guarantees against a slowly shifting comparator, formalizing performance when the target (or optimal subspace) moves over time~\citep{crammer2010regret}; and (iii) low-dimensional training succeeds once the subspace dimension exceeds a threshold tied to the loss landscape’s Gaussian width, explaining why few degrees of freedom can suffice and how this threshold depends on initialization~\citep{larsen2021many}. On the practical side for large-scale LLMs, recent methods instantiate drifting-subspace training by refreshing projection bases during optimization; among them, I3S~\citep{zhang2025i3s} samples singular directions by importance to diversify consecutive subspaces, alleviating “frozen” dominant bases while retaining convergence guarantees comparable to random-projection methods and delivering stronger pretraining perplexity in practice. 
Together, these results motivate modeling—and algorithmically tracking—gradient subspace drift when designing memory-efficient, very low-rank training procedures for LLMs.

\section{Ori algorithm details}  \label{app:algorithm-details}

\subsection{AdamW with Ori}

\Cref{alg:main_loop_detailed} shows the full algorithm of an AdamW optimizer equipped with Ori low-rank gradient projector updates. We note that Ori is not intrinsically entangled with AdamW; rather similar to GaLore~\citep{zhao2024galore}, the Ori projector update subroutine can be easily integrated with other optimizers. We leave such exploration to future work.

\begin{algorithm}[t]
\caption{AdamW with Ori (per-layer update)}
\label{alg:main_loop_detailed}
\begin{algorithmic}[1]
\REQUIRE Layer weights $W \in \mathbb{R}^{m \times n}$
\REQUIRE Learning rate $\eta$, AdamW hyperparameters $\beta_1, \beta_2$, numerical constant $\epsilon$
\REQUIRE Rank $k$, scale factor $\gamma$
\REQUIRE Ori hyperparameters $T_{\text{warmup}}$, $T_{\text{period}}$, Oja++ settings $H_{\text{oja}}$

\ENSURE Updated weights $W$

\STATE Initialize $Q \in \mathbb{R}^{m \times k}$ randomly
\STATE $P \gets \mathrm{QR}(Q)$ \hfill\COMMENT{$P$ has orthonormal columns}
\STATE Initialize first moment $M \gets 0 \in \mathbb{R}^{k \times n}$
\STATE Initialize second moment $V \gets 0 \in \mathbb{R}^{k \times n}$
\STATE $t \gets 0$

\REPEAT
    \STATE $G \gets \nabla_{W}\mathcal{L}(W)$ \hfill\COMMENT{Full gradient}

    \IF{$t < T_{\text{warmup}}$ \OR $t \bmod T_{\text{period}} = 0$}
        \STATE $Q \gets \textsc{UpdateProjector}(Q, G, H_{\text{oja}})$ \hfill\COMMENT{Ori/Oja++ update}
        \STATE $P \gets \mathrm{QR}(Q)$
    \ENDIF

    \STATE $R \gets P^\top G$ \hfill\COMMENT{Projected gradient in $\mathbb{R}^{k \times n}$}

    \STATE $M \gets \beta_1 M + (1-\beta_1)R$
    \STATE $V \gets \beta_2 V + (1-\beta_2)(R \odot R)$ \hfill\COMMENT{$\odot$: elementwise square}
    \STATE $\hat{M} \gets M / (1-\beta_1^{t+1})$ \hfill\COMMENT{Bias correction}
    \STATE $\hat{V} \gets V / (1-\beta_2^{t+1})$
    \STATE $N \gets \hat{M} \oslash (\sqrt{\hat{V}} + \epsilon)$ \hfill\COMMENT{$\oslash$: elementwise divide}

    \STATE $\tilde{G} \gets \gamma \, (P N)$ \hfill\COMMENT{Lift back to $\mathbb{R}^{m \times n}$}
    \STATE $W \gets W - \eta \tilde{G}$
    \STATE $t \gets t + 1$
\UNTIL{convergence criteria met}

\STATE \textbf{return} $W$
\end{algorithmic}
\end{algorithm}

\subsection{Ori Hyperparameters and Recommended Search Strategy}

In total, Ori consists of the following 7 important hyperparameters, listed roughly in the order of the most to least influential to model performance as we observed during method development.
\begin{itemize}
    \item Scale $\gamma$: Similar to GaLore's scale $\alpha$~\citep{zhao2024galore}, this parameter effectively scales the actual learning rate for the model layers for which Ori project the gradients. We suggest always search over this parameter as it highly correlates with model performance. Empirically, we found that one can set a higher $\gamma$ value in the extremely low rank settings, while smaller $\gamma$ tend to yield more stable training at higher ranks. We suggest $\gamma \in \{ 0.25, 0.5, 0.75, 1.0 \}$ for large model pre-training, and $\gamma \in \{ 2.0, 4.0 \}$ for very low-rank fine-tuning.
    \item Drift coefficient $\rho$: This parameter controls how much the last step's projector is carried over to warm-start the next step's projector Oja updates. A $\rho=0.0$ means total random restart, and a $\rho=1.0$ means total carry-over with no randomness injected. As established in \Cref{sec:exp-ablation}, the $\rho=0.99$ tend to be the universally good setting. We also found that $\rho \in \{ 0.8, 0.9 \}$ can be good alternatives in a few scenarios. 
    \item Projector update period $T_{\text{period}}$: This parameter controls the frequency of Ori's update. As established in \Cref{sec:exp-ablation}, we suggest $T_{\text{period}}=20$ as it strikes a perfect balance between model performance and computation budget.
    \item Ori warmup period $T_{\text{warmup}}$: This parameter controls for how many steps at the begging does Ori ignore $T_{\text{period}}$ and update its projector consecutively. We recommand avoid setting too large a value as it may lead to overfitting. Empirically, we observe that $T_{\text{warmup}} = 20$ tend to be stable.
    \item Initial Oja learning rate $\eta$: The initial learning rate for the Oja update (\Cref{eq:oja-update}), which gets decreased by a cosine annealing scheduler, and then gets reset at the next model update step. We observe that, as long as $\eta$ is not too small, Ori yields stable learning and $\eta$'s specific value do not have too much influence. We recommend setting $\eta = 1.0$ for all scenarios.
    \item Number of inner Oja-loop updates $N_{\text{update}}$: This parameter controls the number of inner Oja updates (see \Cref{alg:update_projector}). Note that since Ori uses Oja++ outer loop, the total number of inner Oja updates depends on the optimizer rank: $(\log_2(k) + 1) N_{\text{update}}$. In practice, we set $N_{\text{update}} = 2$ for all experiments.
    \item Number of gradient columns to sample for Oja update $N_{\text{vecs}}$: In practice, we observe that setting it in a way that scales with the model's embedding dimension is a good choice. We suggest $N_{\text{vecs}} \in \{ d_{\text{model}} / 2, d_{\text{model}}, d_{\text{model}} \times 2 \}$.
\end{itemize}

\subsection{Amortized Complexity Analysis}
\label{app:complexity}

In this section, we analyze the amortized complexity of Ori update steps as a function of the hyperparameter knobs as described in \Cref{alg:main_loop_detailed}.

\paragraph{Setup.} Let $d \times n$ be the weight matrix dimensions, $r$ the rank, $T_{\text{period}}$ the update interval, $N_{\text{updates}}$ the inner Oja iterations per block stage, $N_{\text{vecs}}$ the number of sampled gradient columns, and $B = \lceil \log_2(r) \rceil$ the number of Oja++ block stages.

\paragraph{Per-step cost (every step, lines 12–19 of \Cref{alg:main_loop_detailed}).} Projecting the gradient $R = P^\top G$ and lifting back $\tilde{G} = \gamma(PN)$ each cost $O(dnr)$. The Adam update in the projected space costs $O(rn)$. Thus:
\begin{equation*}
    C_{\text{step}} = O(dnr) .
\end{equation*}

\paragraph{Per-update cost (every $T_{\text{period}}$ steps, \Cref{alg:update_projector}).} The Oja++ outer loop runs $B$ block stages with growing active slice sizes, each performing $N_{\text{updates}}$ inner iterations. Each inner iteration costs $O(d \cdot N_{\text{vecs}} \cdot r_{\text{next}}(k))$, where $r_{\text{next}}(k)$ is the active slice at stage $k$. QR re-orthogonalization costs $O(dr^2)$ per update. Since $\sum_{k=0}^{B-1} r_{\text{next}}(k) = O(rB)$:
\begin{equation*}
    C_{\text{update}} = O(B \cdot N_{\text{updates}} \cdot d \cdot N_{\text{vecs}} \cdot r + dr^2) .
\end{equation*}

\paragraph{Amortized per-training-step cost.}
\begin{equation*}
    C_{\text{amortized}} = O(dnr) + O\left(\frac{B \cdot N_{\text{updates}} \cdot d \cdot N_{\text{vecs}} \cdot r + dr^2}{T_{\text{period}}}\right) .
\end{equation*}

In our default configuration ($N_{\text{updates}} = 2$, $T_{\text{period}} = 20$), the amortized update overhead scales as $O\left(\frac{B \cdot N_{\text{vecs}} \cdot dr}{10}\right)$, which is small relative to $C_{\text{step}} = O(dnr)$ when $B \cdot N_{\text{vecs}} / 10 \ll n$, which is easily satisfied in practice (e.g., $B=7$, $N_{\text{vecs}}=512$, $n=2048$ gives ratio $\approx 0.17$).

\paragraph{Memory per layer.} Ori stores the projector $P \in \mathbb{R}^{d \times r}$ and Adam moments $M, V \in \mathbb{R}^{r \times n}$, totaling $O(r(d + 2n))$. At $r=128$, $d=n=2048$, this is ~9.4\% of full-rank Adam's $O(2dn)$, as confirmed later by \Cref{tab:memory-time} ($0.2506$ GB for Ori vs $2.4942$ GB for Adam).

Despite updating $10\times$ more frequently, Ori is the fastest gradient-subspace method. This pattern holds across all model sizes and ranks (as shown in \Cref{tab:memory-time-full}).

\section{Stochastic Power Iteration for LDAdam}\label{app:oLDAdam}
LDAdam~\citep{robert2024ldadam} is a memory-efficient variant of Adam for low-rank LLM training that performs adaptive updates inside a low-dimensional gradient subspace, while using projection-aware rules to transfer the first/second-moment states of Adam as the subspace changes over time. It computes the subspace via the warm-started block power iteration method (warm-started by SVD library call at the very first update step) and adds a generalized error-feedback mechanism to compensate for information lost when compressing both gradients and the first/second-moment optimizer states. As a result, it approaches Adam-like behavior with much smaller optimizer memory in the low-rank setting. 

A key practical limitation, however, is that LDAdam typically requires per-step projector updates (T=1) to maintain accuracy, which can be computationally prohibitive. 
On the other hand, naively increasing LDAdam's subspace update interval to $T=200$ -- matching the standard schedule used by GaLore -- causes a significant performance drop. This is shown in \Cref{tab:pretrain-c4-ldadam}, our main results for pretraining the Llama model on C4 dataset. 
In particular, on the largest, 1b-size model, we find that the training process under LDAdam with $T=200$ may fail to converge, leading to very large evaluation perplexity (above 160 perplexity for both rank $r=512$ and $r=128$). 

We attribute this sensitivity and instability to LDAdam’s reliance on a single-step full-block power-iteration-style refresh: when the gradient subspace drifts over long intervals (e.g., $T=200$), a one-step update is often insufficient to re-align the projector.

Motivated by Ori’s online subspace tracking, we replace LDAdam’s single-step, full-block power iteration with a multi-step stochastic procedure. Concretely, at each step $t$, given the input matrix $M$ from which we intend to compute the subspace basis, we 
sample a row or column vector $x_t$ from $M$, then update the basis via
\begin{equation} \label{eq:stochasticPI-update}
Q_t \leftarrow \text{QR}\left( (\eta_t x_t x_t^\top) Q_{t-1} \right),
\end{equation}
where $\eta_t$ is the update rate at step $t$.
This update can be viewed as a special case of Ori’s Oja-style rule with an explicit identity “carry-over” term,
\begin{equation} \label{eq:oja-alpha-update}
Q_t \leftarrow \text{QR}\left( (\alpha I + \eta_t x_t x_t^\top) Q_{t-1} \right),
\end{equation}
where stochastic power iteration corresponds to setting $\alpha = 0$. We refer to the resulting optimizer as O-LDAdam, reflecting the integration of Ori-style online subspace tracking into LDAdam’s projection-aware, error-correcting framework. \Cref{alg:ldadam-galore-style} presents the algorithm pseudocode of O-LDAdam.

\begin{algorithm}
\caption{O-LDAdam}
\label{alg:ldadam-galore-style}
\begin{algorithmic}[1]
\STATE \textbf{for} weight \textbf{in} model.parameters():
\STATE \hspace{1.6em} grad $\leftarrow$ weight.grad
\STATE \hspace{1.6em} \# states: basis $U$, moments $(m,v)$, error buffer $\xi$
\STATE \hspace{1.6em} $A \leftarrow$ grad $+ \xi$ \hfill \COMMENT{unload error buffer}
\STATE \hspace{1.6em} $U_{\text{old}} \leftarrow U$
\STATE \hspace{1.6em} \textbf{for} $i = 1$ to $N_{\text{steps}}$: \hfill \COMMENT{stochastic subspace tracking}
\STATE \hspace{3.2em} $x_i \leftarrow \texttt{SampleColumn}(A)$
\STATE \hspace{3.2em} $U \leftarrow \text{QR}\left( (\alpha I + \eta_i x_i x_i^\top) U \right)$ \hfill \COMMENT{$\alpha=0$ for stochastic PI}
\STATE \hspace{1.6em} \textbf{end for}
\STATE \hspace{1.6em} $(m,v) \leftarrow \texttt{transfer\_states}(U_{\text{old}} \to U, m, v)$ \hfill \COMMENT{proj-aware transfer}
\STATE \hspace{1.6em} a $\leftarrow U^\top A$ \hfill \COMMENT{original $\to$ compact}
\STATE \hspace{1.6em} $\Delta a, (m,v) \leftarrow \texttt{AdamUpdate}(a, m, v)$ \hfill \COMMENT{update in compact space}
\STATE \hspace{1.6em} $\Delta \leftarrow U\,\Delta a$ \hfill \COMMENT{compact $\to$ original}
\STATE \hspace{1.6em} weight.data $\leftarrow$ weight.data $+ \Delta$
\STATE \hspace{1.6em} $\xi \leftarrow \texttt{error\_feedback}(A, U_{\text{old}}, U, m)$ \hfill \COMMENT{compensate compression}
\end{algorithmic}
\end{algorithm}

\section{Experiment details}
\label{app:exp-details}

\subsection{Dataset and Training Details}
\label{app:dataset-training-details}

We use the C4 dataset for the LLaMA model pre-training task. C4 dataset is a colossal, cleaned version of Common Crawl's web crawl corpus, mainly intended for pre-training language models and word representations, available under the Open Data Commons Attribution License (ODC-BY) license~\citep{raffel2020exploring}. Similar to prior work's experiment settings, we train the model without data repetition over a sufficiently large amount of data. We report the final validation perplexity (the lower the better) for the main experiment table (\Cref{tab:pretrain-c4}).

For all pre-training experiments, we adjust the gradient accumulation steps accordingly, depending on the available GPUs, so that the effective total batch size for all runs are 512 (for instance, an actual batch size of 64 with 8 gradient accumulation steps results in the effective batch size of 512). We set the LLaMA model max length to 256, adopted a cosine learning rate scheduler with a 0.1 minimum learning rate ratio, and trained with bfloat16 data type.

We use the GLUE dataset for the Roberta-Base model fine-tuning task. GLUE is a benchmark for evaluating the performance of NLP models on a variety of tasks, including question answering, sentiment analysis, and textual entailment~\citep{wang2018glue}. We report the final accuracy score (the higher the better) for the main experiment table (\Cref{tab:finetune-glue}). The metric of choice for each GLUE task is listed in \Cref{tab:glue-metric}. Notably, different from many of the prior optimization works, we run each task + optimizer + rank combination for three runs with different random seeds to improve statistical confidence level. Thus, we were able to report the error bar (standard deviation) in the table.

For all the fine-tuning experiments, similar to prior work's experiment settings, we train all models for a total of 30 epochs, with 1 gradient accumulation step. We set the model max length to 512, and adopted a linear learning rate scheduler. 

\begin{table}[t]
    \centering
    \caption{Accuracy metrics we used for the Roberta-Base fine-tuning experiments of \Cref{tab:finetune-glue}.}
    \resizebox{1.0\textwidth}{!}{
    \begin{tabular}{l cccccccc}
        \toprule
        Task   & CoLA & STS-B & MRPC & RTE & SST-2 & MNLI & QNLI & QQP \\
        \midrule
        Metric & Matthews Correlation & Pearson & F1 & Accuracy & Accuracy & Accuracy & Accuracy & F1 \\
        \bottomrule
    \end{tabular}
    }
    \label{tab:glue-metric}
\end{table}

\subsection{Compute resources}
\label{app:compute-resources}

We used a variety of compute resources with different GPU hardware, including RTX 4090, RTX 4070 Ti Super, RTX A500, RTX A6000, 40GB A100, 80GB A100, H100, MI100, and MI210. For the large model pre-training, such as the 350M and 1B LLaMA model pre-training experiments, we trained the model on nodes equipped with 4 H100 or 4 MI210.

\subsection{Ori hyperparameters}
\label{app:ori-hyperparameters}

We report the detailed Ori hyperparameters used in each experiment setting for reproducibility. \Cref{tab:ori-hyper-pretrain} lists all hyperparameters used for the LLaMA-C4 pre-trianing tasks, and \Cref{tab:ori-hyper-finetune} lists all hyperparameters used for the Roberta-Base-GLUE fine-tuning tasks.

\begin{table}[t] 
\centering
\caption{Ori hyperparameter used for the LLaMA pre-training results on the C4 dataset (\Cref{tab:pretrain-c4}).}
\label{tab:ori-hyper-pretrain}
\resizebox{\textwidth}{!}{
\begin{tabular}{l rrr rrr rrr rrr}
\toprule
\multicolumn{1}{c}{\multirow{2}{*}{\textbf{Ori Hyperparameters}}}& \multicolumn{3}{c}{\textbf{60M}} & \multicolumn{3}{c}{\textbf{130M}} & \multicolumn{3}{c}{\textbf{350M}} & \multicolumn{3}{c}{\textbf{1B}} \\
\cmidrule(lr){2-4} \cmidrule(lr){5-7} \cmidrule(lr){8-10} \cmidrule(lr){11-13}
 & $r=128$ & $r=32$ & $r=8$ & $r=256$ & $r=64$ & $r=16$ & $r=256$ & $r=64$ & $r=16$ & $r=512$ & $r=128$ & $r=32$ \\
\midrule
$\gamma$            & 0.5 & 0.5 & 1.0    & 0.25 & 0.25 & 0.25 & 0.75 & 0.75 & 0.5 & 0.5 & 0.5 & 0.5 \\
$\rho$              & 0.99 & 0.99 & 0.99 & 0.99 & 0.99 & 0.99 & 0.99 & 0.99 & 0.99 & 0.99 & 0.99 & 0.99 \\
$T_{\text{period}}$ & 20  & 20  & 20     & 20  & 20  & 20     & 20  & 20  & 20 & 20  & 20  & 20 \\
$T_{\text{warmup}}$ & 20  & 20  & 20     & 20  & 20  & 20     & 20  & 20  & 20 & 20  & 20  & 20 \\
$\eta$              & 1.0 & 1.0 & 1.0    & 1.0 & 1.0 & 1.0    & 1.0 & 1.0 & 1.0  & 1.0 & 1.0 & 1.0  \\
$N_{\text{update}}$ & 2   & 2   & 2      & 2   & 2   & 2      & 2   & 2   & 2  & 2   & 2   & 2  \\
$N_{\text{vecs}}$   & 256 & 256 & 256    & 768 & 768 & 768    & 256 & 256 & 256 & 512 & 512 & 512   \\
\bottomrule
\end{tabular}
} 
\end{table}

\begin{table}[t]
    \centering
    \caption{Ori hyperparameter used for the Roberta-Base fine-tuning results on the GLUE dataset (\Cref{tab:finetune-glue})}
    \label{tab:ori-hyper-finetune}
    \begin{tabular}{c c}
        \toprule
        \textbf{Ori Hyperparameters} & \textbf{GLUE all tasks} \\
        \midrule
        $\gamma$            &  4.0 \\
        $\rho$              &  0.99 \\
        $T_{\text{period}}$ &  20  \\
        $T_{\text{warmup}}$ &  20 \\
        $\eta$              &  1.0 \\
        $N_{\text{update}}$ &  2 \\
        $N_{\text{vecs}}$   &  768 \\
        \bottomrule 
    \end{tabular} 
\end{table}

\subsection{O-LDAdam hyperparameters}
\label{app:oldadam-hyperparameters}

We report the subset of O-LDAdam hyperparemeters pertinent to the Ori-inspired stochastic power iteration update in each experiment setting for reproducibility. The rest of O-LDAdam hyperparameters are inherited from, and thus the same as, the baseline LDAdam\citep{robert2024ldadam}. \Cref{tab:o-ldadam-hyper-pretrain} lists all hyperparameters used for the LLaMA-C4 pretraining tasks, and \Cref{tab:o-ldadam-hyper-finetune} lists all hyperparameters used for the Roberta-Base-GLUE fine-tuning tasks.

\begin{table}[t] 
\centering
\caption{O-LDAdam hyperparameter related to the Ori-inspired stochastic power iteration update used for the LLaMA pre-training results on the C4 dataset (\Cref{tab:pretrain-c4-ldadam}).}
\label{tab:o-ldadam-hyper-pretrain}
\resizebox{\textwidth}{!}{
\begin{tabular}{l rrr rrr rrr rrr}
\toprule
\multicolumn{1}{c}{\multirow{2}{*}{\textbf{Ori Hyperparameters}}}& \multicolumn{3}{c}{\textbf{60M}} & \multicolumn{3}{c}{\textbf{130M}} & \multicolumn{3}{c}{\textbf{350M}} & \multicolumn{3}{c}{\textbf{1B}} \\
\cmidrule(lr){2-4} \cmidrule(lr){5-7} \cmidrule(lr){8-10} \cmidrule(lr){11-13}
 & $r=128$ & $r=32$ & $r=8$ & $r=256$ & $r=64$ & $r=16$ & $r=256$ & $r=64$ & $r=16$ & $r=512$ & $r=128$ & $r=32$ \\
\midrule
$T_{\text{period}}$ & 200  & 200  & 200     & 200  & 200  & 200     & 200  & 200  & 200 & 200  & 200  & 200 \\
$T_{\text{warmup}}$ & 0  & 0  & 0     & 0  & 0  & 0     & 0  & 0  & 0 & 0  & 0  & 0 \\
$\eta$              & 1.0 & 1.0 & 1.0    & 1.0 & 1.0 & 1.0    & 1.0 & 1.0 & 1.0  & 1.0 & 1.0 & 1.0  \\
$N_{\text{steps}}$ & 8   & 8   & 8      & 8   & 8   & 8      & 16   & 16   & 16  & 16   & 16  & 16  \\
$N_{\text{vecs}}$   & 1 & 1 & 1    & 1 & 1 & 1    & 1 & 1 & 1 & 1 & 1 & 1   \\
\bottomrule
\end{tabular}
}
\end{table}

\begin{table}[t]
    \centering
    \caption{O-LDAdam hyperparameter related to the Ori-inspired stochastic power iteration update used for the Roberta-Base fine-tuning results on the GLUE dataset (\Cref{tab:finetune-glue})}
    \label{tab:o-ldadam-hyper-finetune}
    \resizebox{\textwidth}{!}{
    \begin{tabular}{c c c}
        \toprule
        \textbf{Ori Hyperparameters} & \textbf{GLUE task MNLI, MRPC, CoLA, QQP, RTE, STSB} & \textbf{GLUE task SST2, QNLI} \\
        \midrule
        $T_{\text{period}}$ &  200  & 200 \\
        $T_{\text{warmup}}$ &  0    & 0\\
        $\eta$              &  1.0  & 1.0 \\
        $N_{\text{steps}}$  &  4    & 2 \\
        $N_{\text{vecs}}$   &  1    & 1 \\
        \bottomrule 
    \end{tabular} 
    }
\end{table}

\subsection{Baseline hyperparameters}
\label{app:baseline-hyperparameters}

Baseline hyperparameters are discussed in the main experiment section (see \Cref{sec:exp-baseline}).

\subsection{Visualizing subspace drift}
\label{app:visualizing-subspace-drift}

We reused the setting of 60M LLaMA model pre-training on C4 dataset to visualize an actual gradient subspace drift as the model is being trained in real life, resulting in \Cref{fig:subspace-overlap}. 

For this experiment, we trained the 60M LLaMA model with the Adam optimizer for a reduced 1000 training steps, totalling 100M training tokens. We set $\Delta t = 10$, $k \in \{ 128, 32, 8 \}$, and computed $\text{overlap}\left(U^{(t)}_k, U^{(t+\Delta t)}_k\right)$ (\Cref{eq:subspace-overlap}) for every layers in the model. We then report and plot the mean overlap, averaged over all layers, through the training trajectories. 

To further demonstrate that the overlap values shown in \Cref{fig:subspace-overlap} are not ad-hoc and it indeed signify that the gradient subspace has non-trivial overlapping, we simulated three random baselines, one for each rank $k\in \{ 128, 32, 8 \}$. The random baseline is conducted by sampling a random matrix with the same shape and Frobenius norm as the true gradient matrix, for each layer at each step. We then pretend these random matrices were some ``gradients'', and record their overlap values with the earlier random matrices.

\subsection{Ablation study details}
\label{app:ablation-details}

We reused the setting of 60M LLaMA model pre-training on C4 dataset to perform the ablation study. The ori optimizer in the ablation studies used the same hyperparameters as detailed in \Cref{tab:ori-hyper-pretrain}. 

For the ablation of Oja++ vs. Standard Oja, recall that the default choice of Oja++ algorithm consists of an outer loop and an inner loop \Cref{alg:update_projector}, and the total number of Oja inner updates in the default setting is $(\log_2(k) + 1) N_{\text{update}}$. Thus, to ensure fairness, we set this number to the Oja-variant of Ori. That is, since we use $N_{\text{update}}=2$ for all experiments, in this ablation study, we set the number of Oja baseline udpates to 16 when $k=128$, 12 when $k=32$, and finally 8 when $k=8$.

\subsection{Measuring optimizer state peak memory}
\label{app:exp-memory-profiling}

Finally, we report our procedure for measuring the memory footprint of the optimizers evaluated in our experiments. Our goal is to record the peak memory usage, incurred solely by the said optimizer's update step. To this end, we take advantage of PyTorch's memory recording functionalities.

The below code listing shows the pseudocode for recording peack memory usage incurred solely by the optimizer's \texttt{step()} operations. We use this procedure to obtain all the results in \Cref{tab:memory-time,tab:memory-time-full}.

\begin{lstlisting}[language=Python, caption={Python code for profiling optimizer peak memory.}, label={lst:memory-code}]
import torch

# Model and optimizer setup
model = ...
optimizer = ...

# Main training loop
for batch in enumerate(dataloader):

    # Forward and backward pass
    loss = model(**batch).loss
    loss.backward()

    # Record the current memory usage after model backward
    mem_after_backward = torch.cuda.memory_allocated()

    # Reset the max memory counter
    torch.cuda.reset_peak_memory_stats()

    # Optimizer step
    optimizer.step()

    # Read the optimizer-incurred max memory
    # The difference with mem_after_backward is the 
    # maximum additional memory caused by optimizer.step()
    peak_mem_during_step = torch.cuda.max_memory_allocated()
    optim_peak_mem = peak_mem_during_step - mem_after_backward

\end{lstlisting}

\section{Additional experiment results}  \label{app:additional-experiment-results}

\subsection{Memory and Computation Time}
\label{sec:exp-memory-time}

\begin{table}[t]
\centering
\caption{Memory and Wallclock Time Comparison between Ori and non-error-feedback baselines (Adam, ReLoRA, GaLore, OSD). Analysis is conducted by pre-training 1B LLaMA on C4 dataset.}
\label{tab:memory-time}
\resizebox{0.6\linewidth}{!}{%
\begin{tabular}{l rrrr}
\toprule
Optimizer rank                                  & $r = 512$ & $r = 128$ & $r = 32$  \\
\midrule
\multicolumn{4}{c}{\textit{Optimizer State Peak Memory (GB)}} \\
\midrule
Adam                                            & 2.4942    & 2.4942    & 2.4942    \\
ReLoRA$^\text{\tiny\citep{lialin2023relora}}$   & 1.1349    & 0.4670    & 0.3000    \\
GaLore$^\text{\tiny\citep{zhao2024galore}}$     & 0.3385    & 0.3385    & 0.3385    \\
OSD$^\text{\tiny\citep{liang2024memory}}$       & 0.2832    & 0.2832    & 0.2832    \\
\textbf{Ori (Ours)}                             & {0.2583}	& {0.2506}	& {0.2487} \\
\midrule
\multicolumn{4}{c}{\textit{Average Update Time (s / Step)}} \\
\midrule
Adam                                            & 12.6015   & 12.6015	& 12.6015 \\
ReLoRA$^\text{\tiny\citep{lialin2023relora}}$   & 13.9454	& {11.9193}	& {11.5661} \\
GaLore$^\text{\tiny\citep{zhao2024galore}}$     & 12.9699	& 12.9703	& 13.0401 \\
OSD$^\text{\tiny\citep{liang2024memory}}$       & 12.8438	& 12.8373	& 12.8594 \\
\textbf{Ori (Ours)}                             & {12.7266}	& 12.7206 & 12.6705 \\
\bottomrule
\end{tabular}
}
\end{table}

\begin{table}[t]
\centering
\caption{Memory and Wallclock Time Comparison between O-LDAdam and an error-feedback baseline, LDAdam. Analysis is conducted by pre-training 1B LLaMA on C4 dataset.}
\label{tab:memory-time-ldadam}
\resizebox{0.6\linewidth}{!}{
\begin{tabular}{l rrrr}
\toprule
Optimizer rank                                  & $r = 512$ & $r = 128$ & $r = 32$  \\
\midrule
\multicolumn{4}{c}{\textit{Optimizer State Peak Memory (GB)}} \\
\midrule
LDAdam ($T=1$)                                  & 0.1221    & 0.1221    & 0.1221 \\
LDAdam ($T=200$)                                & 0.1221    & 0.1221    & 0.1221 \\
\textbf{O-LDAdam (Ours)}                      & 0.1221    & 0.1221    & 0.1221 \\
\midrule
\multicolumn{4}{c}{\textit{Average Update Time (s / Step)}} \\
\midrule
LDAdam ($T=1$)                                  & 13.9294   & 13.3475   & 13.1410 \\
LDAdam ($T=200$)                                & 12.7141   & 12.6996   & 12.6863 \\
\textbf{O-LDAdam (Ours)}                      & 13.4735   & 12.9018   & 12.8648 \\
\bottomrule
\end{tabular}
}
\end{table}

We profile the optimizer state peak memory and wall-clock time during the pre-training experiments from \Cref{sec:exp-pretrain}.
Details on the memory profiling procedure can be found in \Cref{app:exp-memory-profiling}.
As shown in \Cref{tab:memory-time}, Ori exhibits a superior efficiency profile among standard, non-error-feedback gradient-subspace methods (Adam, ReLoRA, GaLore, OSD). In addition, as shown in \Cref{tab:memory-time-ldadam}, O-LDAdam demonstrates competitive and comparable efficiency profile to the error-feedback baseline LDAdam under the large update frequency regime ($T=200$).
Full table on pre-training of other model sizes can be found in \Cref{app:additional-experiment-results}.

\textbf{Memory Usage:} Compared to standard, non-error-feedback baselines, Ori consistently consumes the least optimizer peak memory across all ranks. For instance, at rank 32, Ori requires only 0.2487 GB, which is 12.1\% less than OSD (0.2832 GB) and 26.5\% less than GaLore (0.3385 GB). This confirms Ori's tangible memory savings. 

On the other hand, compared to the error-feedback methods, O-LDAdam consistently yields the same memory footprint as LDAdam baselines.

\textbf{Training Speed:} Despite updating its subspace projector 10x more frequently ($T_{\text{Ori}}=20$ vs. $T_{\text{GaLore/OSD}}=200$), Ori is still the fastest among all standard, non-error-feedback methods. While the weight-subspace method ReLoRA is faster at lower ranks due to its different update mechanism, its worse model performance (see \Cref{tab:pretrain-c4}) highlights a clear trade-off. In contrast, Ori achieves superior model performance without compromising on speed, making it a more effective and efficient choice overall. The lightweight nature of the Oja++ update allows Ori to be both more accurate and faster than its direct competitors, making it a better choice for efficient training.

On the other hand, although O-LDAdam yields a slightly longer average update time than LDAdam with $T=200$ due to the need to perform power iterations multiple times (\Cref{alg:ldadam-galore-style}), the additional cost is not prohibitively large. Moreover, O-LDAdam demonstrates a nontrivial speedup compared to the standard LDAdam setting with a per-step update schedule ($T=1$). Together with the model performance results shown in \Cref{tab:pretrain-c4-ldadam}, this demonstrates that O-LDAdam strikes a good balance between model performance and computational cost in the extremely low-rank regime.

\Cref{tab:memory-time-full} presents the full results of memory and Wallclock time comparison of all the optimizer methods on the LLaMA model pre-training task on the C4 dataset. As shown by the table, our method Ori achieves superior memory and computation time profile almost across the broad, demonstrating its general efficiency under various settings.

\begin{table}[t] 
\centering
\caption{Memory and Wallclock Time Comparison of pre-training various sized LLaMa model on the C4 dataset.}
\label{tab:memory-time-full}
\resizebox{\textwidth}{!}{%
\begin{tabular}{l rrr rrr rrr rrr}
\toprule
\multicolumn{1}{c}{\multirow{2}{*}{\textbf{Method}}}& \multicolumn{3}{c}{\textbf{60M}} & \multicolumn{3}{c}{\textbf{130M}} & \multicolumn{3}{c}{\textbf{350M}} & \multicolumn{3}{c}{\textbf{1B}} \\
\cmidrule(lr){2-4} \cmidrule(lr){5-7} \cmidrule(lr){8-10} \cmidrule(lr){11-13}
 & $r=128$ & $r=32$ & $r=8$ & $r=256$ & $r=64$ & $r=16$ & $r=256$ & $r=64$ & $r=16$ & $r=512$ & $r=128$ & $r=32$ \\
\midrule
\multicolumn{13}{c}{\textit{Optimizer Peak Memory (GB)}} \\
\cmidrule(lr){1-13} %
Adam    & 0.1082	& -- & --	& 0.2498	& --	& --	& 0.6854	& --	& --	& 2.4942	& --	& -- \\
ReLoRA$^\text{\tiny\citep{lialin2023relora}}$  & 0.0797	& 0.0657	& 0.0622	& 0.1751	& 0.1125	& 0.0968	& 0.3450	& 0.1779	& 0.1361	& 1.1349	& 0.4670	& 0.3000 \\
GaLore$^\text{\tiny\citep{zhao2024galore}}$   & 0.0670	& 0.0670	& 0.0670	& 0.1049	& 0.1049	& 0.1049	& 0.1458	& 0.1458	& 0.1458	& 0.3385	& 0.3385	& 0.3385 \\
OSD$^\text{\tiny\citep{liang2024memory}}$     & 0.0635	& 0.0635	& 0.0635	& 0.0970	& 0.0970	& 0.0970	& 0.1318	& 0.1318	& 0.1318	& 0.2832	& 0.2832	& 0.2832 \\
\textbf{Ori (Ours)} 
        & \best{0.0623}	& \best{0.0619}	& \best{0.0618}	& \best{0.0948}	& \best{0.0932}	& \best{0.0927}	& \best{0.1256}	& \best{0.1237}	& \best{0.1232}	& \best{0.2583}	& \best{0.2506}	& \best{0.2487} \\
\bottomrule
\multicolumn{13}{c}{\textit{Average Update Time (s / Step)}} \\
\cmidrule(lr){1-13} %
Adam    & 0.5198	& --	& --	& 1.0309	& --	& --	& 2.655	& --	& --	& 12.6015	& --	& -- \\
ReLoRA$^\text{\tiny\citep{lialin2023relora}}$  & 0.6622	& 0.6395	& 0.6413	& 1.3324	& 1.2824	& 1.2626	& 3.3348	& 3.1886	& 3.1423	& 13.9454	& \best{11.9193}	& \best{11.5661} \\
GaLore$^\text{\tiny\citep{zhao2024galore}}$   & 0.5454	& 0.5262	& 0.5270	& 1.0617	& 1.0575	& 1.0601	& 2.7733	& 2.7702	& 2.7754	& 12.9699	& 12.9703	& 13.0401 \\
OSD$^\text{\tiny\citep{liang2024memory}}$     & 0.5664	& 0.5614	& 0.5645	& 1.0917	& 1.0866	& 1.1003	& 2.7717	& 2.8100	& 2.8067	& 12.8438	& 12.8373	& 12.8594 \\
\textbf{Ori (Ours)} 
        & \best{0.5313}	& \best{0.5260}	& \best{0.5229}	& \best{1.0490}	& \best{1.0473}	& \best{1.0383}	& \best{2.7260}	& \best{2.7018}	& \best{2.6894}	& \best{12.7266}	& 12.7206 & 12.6705 \\
\bottomrule
\multicolumn{13}{c}{\textit{Total Update Time (s)}} \\
\cmidrule(lr){1-13} %
Adam    & 520	& --	& --	& 1032	& --	& --	& 2658	& --	& --	& 12614	& --	& -- \\
ReLoRA$^\text{\tiny\citep{lialin2023relora}}$  & 663	& 640	& 641	& 1334	& 1284	& 1264	& 3338	& 3192	& 3145	& 13931	& \best{11931}	& \best{11578} \\
GaLore$^\text{\tiny\citep{zhao2024galore}}$   & 546	& 527	& 528	& 1063	& 1059	& 1061	& 2776	& 2772	& 2778	& 13053	& 12984	& 12983 \\
OSD$^\text{\tiny\citep{liang2024memory}}$     & 567	& 562	& 565	& 1093	& 1088	& 1101	& 2774	& 2812	& 2809	& 12857	& 12850	& 12872 \\
\textbf{Ori (Ours)} 
        & \best{532}	& \best{526}	& \best{523}	& \best{1049}	& \best{1047}	& \best{1038}	& \best{2728}	& \best{2704}	& \best{2692}	& \best{12740}	& 12733	& 12683 \\
\bottomrule
\end{tabular}
}
\end{table}

\section{Limitations}
\label{sec:limitations}

We discuss several limitations of this work that present opportunities for future research.

\paragraph{Behavior under heterogeneous training data.}
Our empirical evaluation focuses on single-domain pretraining (the C4 dataset~\citep{raffel2020exploring}) and standard fine-tuning benchmarks (the GLUE dataset~\citep{wang2018glue}), following the established experimental protocol used by GaLore~\citep{zhao2024galore}, OSD~\citep{liang2024memory}, and other gradient-subspace methods. An important open question, shared by the many works of low-rank optimizer research, is whether global gradient subspace projection methods perform well when training on highly heterogeneous data mixtures spanning diverse domains (e.g., code, mathematics, chat, and multilingual data simultaneously). In such settings, it is conceivable that the top-$k$ principal gradient directions could be dominated by the most prevalent domains, potentially suppressing rare but important domain-specific gradient signals. Recent work by~\citet{li2025pike} suggests that gradient conflicts are minimal during large-scale LLM pretraining across multilingual and multi-domain settings, which provides initial evidence that this concern may be less severe than expected. Nevertheless, a rigorous investigation---including defining appropriate evaluation metrics for specialized capability retention, curating heterogeneous benchmarks, and systematically comparing low-rank methods under diverse data compositions---would be a valuable contribution to the field. Notably, Ori's high-frequency streaming updates and potential easy composability with standard multi-task optimization methods (e.g., PCGrad~\citep{yu2020gradient}) position it well for such settings, but empirical validation remains future work.

\paragraph{Optimizer state handling across basis changes.}
When Ori updates the gradient projector $P$ via Oja++, the Adam optimizer moments $M$ and $V$ are not explicitly rotated into the new basis. Instead, Ori relies on the slow-drift property ($\rho = 0.99$) and Adam's exponential moving average to naturally decay stale moment information. While this is empirically effective (as validated by the high subspace overlap in \Cref{fig:subspace-overlap} and the strong end-to-end performance), explicitly projecting optimizer states across basis changes---as explored by~\citet{schotthofer2025geometric} in the context of weight-space low-rank factorization---could yield further improvements, particularly in regimes with faster subspace drift or lower $\rho$. We consider this a promising direction for future work.

\paragraph{Adaptive rank.}
Ori operates at a fixed rank $r$ throughout training, which must be specified in advance. Combining Ori's streaming subspace tracking with rank-adaptive methods such as AdaLoRA~\citep{zhang2023adalora}, which dynamically allocates rank budgets across layers based on importance metrics, could enable more flexible and efficient low-rank training. Since Ori's subspace tracking mechanism is orthogonal to rank selection, such a combination is a natural extension.

\section{Broader Social Impact}
\label{app:social_impact}

This paper proposes a novel low-rank optimizer for training large language models, which is a foundational research area in machine learning. As such, we do not foresee any immediate negative societal impacts. However, we recognize that our work, like most AI research, could have broad, long-term implications, including potential issues with energy consumption or, indirectly, in downstream applications.

\end{document}